\newtheorem{Theorem}{Theorem}
\newtheorem{Lemma}{Lemma}
\newenvironment{proof}{{\em Proof:} \ \ }{\begin{flushright}$\Box$\end{flushright}}
\newcommand{\R}{\mathbb{R}}
\newcommand{\yv}{\mathbf{y}}
\newcommand{\wv}{\mathbf{w}}
\newcommand{\vv}{\mathbf{v}}
\begin{document}

\title{FADO: A Deterministic Detection/Learning Algorithm}

\author{ Kristiaan Pelckmans}


\maketitle

\begin{abstract}
	This paper\footnote{See \url{http://user.it.uu.se/~kripe367/} for additional material.},\footnote{This 
	work is supported by Swedish Research Council under contract
	621-2007-6364.} proposes and studies a detection technique for adversarial scenarios (dubbed {\em deterministic detection}).
	This technique provides an alternative detection methodology in case the usual
	stochastic methods are not applicable: 
	this can be because the studied phenomenon does not follow a stochastic sampling scheme, 
	samples are high-dimensional and subsequent multiple-testing corrections render results overly conservative,
	sample sizes are too low for asymptotic results (as e.g. the central limit theorem) to kick in,
	or one cannot allow for the small probability of failure inherent to stochastic approaches.

	This paper instead designs a method based on insights from machine learning and online learning theory:
	this detection algorithm - named Online FAult Detection (FADO) - comes with theoretical guarantees of its detection capabilities.
	A version of the {\em margin}, $\mu$, is found to regulate the detection performance of FADO.	
	A precise expression is derived for bounding the performance, and experimental results are presented assessing the influence of involved quantities. 
	A case study of scene detection is used to illustrate the approach.
	The technology is closely related to the linear perceptron rule, inherits its computational attractiveness and flexibility towards various extensions.
\end{abstract}

%

\section{Introduction}

Detection lies at the heart of the study of inference (see e.g. the classical surveys
 \cite{liese2007statistical,van2004detection}) and of the scientific endeavour in general. 
Most theoretical results adopt explicitly a stochastic (statistical) setting: the observed samples are assumed to be sampled randomly from an underlying distribution with unknown characteristics. Deterministic alternatives often rely on a method of machine learning as clustering (see e.g the surveys
 \cite{fawcett1997adaptive, chandola2007outlier, tsai2009intrusion}). 
This paper studies an alternative unsupervised approach for doing so, 
and derives formal deterministic results relating its detection performance to a notion of {\em margin}.
In the context of supervised learning, the margin is found to be of central importance to characterise learning performance, 
see e.g. \cite{shalev2014understanding} and pointers.

To differentiate from a stochastic setting, the following nomenclature is used.
A {\em transaction} is an object of interest, playing a similar role as a sample, a record, an observation or alike.
Most of the transactions are {\em normal} ({\em non-fraudulent}), but a fraction of the collected transactions are {\em faulty} (non-normal, or {\em fraudulent}).
This property is unknown at any time, but the {\em detection algorithm} is trying to recover this information nonetheless.
A transaction is fully observed in a single {\em time-step}. At the end of the time-step,
the {\em detection algorithm} needs to make a decision about it: should it be flagged as {\em potentially faulty}, or as {\em normal}?
Hence, the setting is {\em unsupervised} and {\em online}: there is no supervision (labels, ...) to guide the detection algorithm, 
and learning proceeds time-wise.
One can solely rely on the assumption that there are more {\em normal} than {\em faulty} transactions.
That is, the faulty ones can take arbitrary form or place in the observed stream of transactions.
We ask ourself the question under what conditions one can address this formidable task successfully.
In brief, it is found that this is possible when a certain margin $\mu$ is large enough.

Many techniques were put forward in a context of machine learning which might fit this task
(see e.g. \cite{chandola2007outlier} and pointers).
A first approach to address this unsupervised task is to cluster data, and decide if subsequent samples fit this cluster solution ('normal') or not ('faulty').
A second class of methods do explicitly use the time-ordering by invoking methods of change detection.
A third class of machine learning-based approaches leans on stochastic assumptions, 
for example by estimating the support of a density using SVM-like methods 
- often going under the name anomaly detection, see e.g. \cite{scholkopf2001estimating} and subsequent work. 
Many of those techniques do not come with theoretical guarantees of the resulting {\em detection} performance.
Especially, the performance of clustering is notoriously hard to characterise per se (as found in e.g. \cite{kleinberg2003impossibility,ben2006sober,zadeh2009uniqueness,awasthi2014center}),
Another issue arises when the data does not really reflect a cluster structure (i.e. is not 'clusterable'), 
admit a parametric time-series model (e.g. is strongly time-variant), or has unbounded support.
In case such an assumption is too remote from the actual situation, a two-step approach is less appropriate.
The current approach replaces such modelling assumption by an assumption of the detectability stating that
{\em 'there exists an (approximate) detection rule of the studied form'}.
One may argue that such assumption is in many cases more natural to make.

Motivations for investigating a method for detection in a non-stochastic setup are 
\begin{itemize}
\item found in cases where transactions (faulty or normal ones) {\em do not follow a random sampling strategy}.
	Even in case of strong dependence between transactions, principles based on
	{\em exchangeability, time-varying structures, or Markov models, ...} might be used to 
	fit those observations into a stochastic framework.
	However, some situations refuse any such approach,
	especially in cases where deterministic or adversarial strategies are followed. 
	For example, consecutive frames of a movie have strong, intrinsic and varying dependendence structures.
	In case of fraud detection, cyber crime can hardly be modelled as a random source.
	Such situations are studied extensively in the context of  
	information theory (under the name of {\em  individual sequences}), in game-theory and online learning theory, 
	see \cite{cesabianchi2006} for a survey.

\item {\em Repeated application} of stochastic methods demand proper (multiple-test) corrections.
	This fact is properly acknowledged in applications of bio-informatics and medical imaging, see e.g. \cite{benjamini1995} and citing work.
	This is problematic in case of indefinite application of the detection rule: if the rule is applied in critical real-world applications,
	guarantees of the rule should not degrade in terms of the number of times it is applied.
	Also in cases where the dimensionality is overwhelming, corrections for multiple testing render (stochastic) results often overly conservative
	(see e.g. the survey by \cite{nichols2012multiple}).
	In case of deterministic detection however, no such correction is needed when performed multiple times.

\item In cases of (complex) stochastic schemes, one often invokes {\em asymptotic} results.
	These need a (relatively) large set of samples for asymptotic results to kick in.
	This in turn makes results not reliable in case of a few samples.
	In contexts of financial, medical, and other applications, this however is frequently the regime of relevance,
	and methods of machine learning are often invoked to assist.

\item Stochastic techniques necessarily admit a {\em small probability that things go wrong}.
	In some applications such (however small) probability is problematic. 
	This is especially the case for commercial systems which are going to be deployed unknown times.
	For example, one does not want to use a self-driving car that might mis-interprete critical situations.
	Similarly, one does not like to use a medical drug that might bring patients in a condition more 
	critical than the one it was designed to resolve.
\end{itemize}
One can also find ethical, governmental or societal arguments for advocating a worst-case detection strategy rather than a stochastic one.
We envision a scenario in fault detection as in \cite{bolton2002statistical} where the faults do not behave randomly
(i.e. as statistical outliers, see \cite{rousseeuw2005robust}),
but actually try to deceive the system as well as possible. Can we build a detection 
system which is insensitive to such malicious behaviour?
This hints at an application of game-theory, and indeed the used methods of online learning are closely related, see e.g. the survey \cite{cesabianchi2006}.
The object of this paper is as such not so much to device the most advanced, powerful method possible, 
but to explore a conceptual and theoretically sound alternative approach.

So the solution is set in a context of online learning.
Specifically, a setup is considered of deterministic detection of a stream of transactions, 
given as $\yv_1, \yv_2, \yv_3, \dots, \yv_t, \dots$, each represented as a vector in $\R^n$.
Algorithm (\ref{alg.ofd}) formalises the setup.

\begin{algorithm}
\caption{Online Fault Detection}
\label{alg.ofd}
\begin{algorithmic}
\STATE Initialise $\wv_0\in\R^n$ and $\epsilon_0\geq 0$.
\FOR {$t=1,2,\dots$}
\STATE(1) Receive transaction $\yv_t\in\R^n$.
\STATE(2) Decide if the transaction were {\em faulty} as
	\begin{equation}
  	\left\|\yv_t - \wv_{t-1}\right\|_2 \geq \epsilon_{t-1},
  	\end{equation}
  	or not 
\STATE(3) If so, raise an alarm. 
\STATE(4) 
	Update 
	$\wv_{t-1}\rightarrow \wv_t$ and $\epsilon_{t-1}\rightarrow\epsilon_t$.
\ENDFOR
\end{algorithmic}
\end{algorithm}

An alarm in alg. (\ref{alg.ofd}) can mean two things:
\begin{itemize}
\item Either the {\em normal} model is not adequate enough: further learning  (i.e. the alarm is {\em false}) is needed.
\item Or the transaction is really faulty (a {\em true } alarm) and the model performed adequately.
\end{itemize}
Note that flagging a transaction with {\em an alarm} does not imply that 
the transaction is necessarily stamped as {\em faulty}. 
But it indicates that there are reasons for suspicion.
Often further analysis is to be triggered in such cases,
as long as there are not too many {\em false} alarms.

This setting is an instantiation of the {\em exploration-exploitation} trade-off.
If the model were accurate enough, exploitation of the system leads to good detection and hence true alarms.
If the model were not precise enough, further {\em exploration} is needed.
The algorithm trades both behaviours so as to give good detection performance.
The key idea is to treat an alarm and consecutive learning step as a single notion called a {\em mistake},
and to bound the occurrence of such.

Note that this basic setup can be extended towards multiple situations.
For example, it requires a straightforward extension of notation to extend this results to the 
case where one has additional side-information. Another interesting extension is for the 
case where one has $K>1$ clusters. 
A third extension is to the case where lagged values of $\yv_t$ are relevant. 
While results are linear in this case, extensions using reproducing kernels (described in \cite{smola1998learning}) are immediate.
This report focusses on the {\em vanilla} (most basic) setup.

This paper is organised as follows.
The next section details the FADO algorithm and provides theoretical analysis.
Section III and IV reports numerical performances and results obtained in a case study of scene detection,
and section V concludes and provides directives for further work.

\section{FADO: Online FAult Detector}

This section gives the formal setup.
Consider the case where the transactions are represented as a vector 
$\yv_t\in\R^n$ and where there is no side-information for this transaction available.
The {\em fraudulent} or {\em non-fraudulent} decision is encoded by a vector 
$\wv\in\R^n$ and parameter $\epsilon>0$ so that for all {\em non-fraudulent} transactions, 
one has 
\begin{equation}
	\|\wv - \yv_t\|_2  <  \epsilon,
	\label{eq.case.w}
\end{equation}
and for the {\em fraudulent} ones holds that 
\begin{equation}
	\|\wv - \yv_t\|_2 \geq  \epsilon.
	\label{eq.case.w}
\end{equation}
How to go about learning $\wv$ and $\epsilon$, while detecting?
The subsequent analysis follows this strategy:
\begin{enumerate}
\item First, we assume that {\em all} transactions are normal, and faults detected by the algorithm are necessarily {\em false}.
	This is similar to the so-called {\em realisability} assumption in learning theory, see e.g. \cite{vapnik95,shalev2014understanding}.
	Theorem 1 establish that those cannot be too many for the studied FADO algorithm.
\item Then we proof that at a time $t$, the solution given by FADO achieves a certain power of detection.
	That is, we quantify how {\em abnormal} the transaction at time $t+1$ need to be under these conditions, 
	to be  detected as {\em faulty} (positive alarm).
\item Finally, it is derived how the presence of faulty transactions is not going to influence the subsequent solution nor the bounds too much.
\end{enumerate}
This logic is followed in the derivations below.

\subsection{Realisable case, known $\epsilon>0$}

At first, we consider the case where the radius $\epsilon>0$ is given and fixed,
and we consider how to learn $\wv$ by considering the sequence $\wv_1, \wv_2, \dots, \wv_t,  \dots\rightarrow\wv$.
Given this sequence, then the decision are made as 
\begin{equation}
	d_t=
		\begin{cases}
			1 & \mbox{ \ If \ } \| \yv_t - \wv_{t-1}\|_2\geq \epsilon\\
			0 & \mbox{ \ Otherwise.}	
		\end{cases}
		\label{eq.d2}
\end{equation}
The FADO algorithm constructs the sequence $\{\wv_t\}_t$ as detailed in Alg. (\ref{alg.fado1}).

\begin{algorithm}
\caption{FADO($\epsilon$)}
\label{alg.fado1}
\begin{algorithmic}
\STATE Initialise $\wv_0=0_d$.
\FOR {$t=1,2,\dots$}
\STATE(1) Receive transaction $\yv_t\in\R^n$.
\STATE(2) Raise an alarm if 
	\[ \left\|\yv_t - \wv_{t-1}\right\|_2 \geq \epsilon, \]
	and set 
	\[ \vv_t = \frac{ \yv_t - \wv_{t-1}}{\|\yv_t - \wv_{t-1}\|_2}\in\R^n.\]
\STATE(3) If an alarm is raised, update
	\[ 	\wv_t = \wv_{t-1} + \gamma_t \vv_t.\]
	Otherwise, set $\wv_t=\wv_{t-1}$.
\ENDFOR
\end{algorithmic}
\end{algorithm}

The number of alarms raised by this algorithm prior to time $t$, is denoted 
as $m_t$ where
\begin{equation}
	m_t = \sum_{s=1}^t d_s.
	\label{eq.m}
\end{equation}
Now we proof that this method makes only a bounded number of mistakes, 
that is $m_t<\infty$ for any $t$.
Hereto we assume the existence of a perfect solution $\bar\wv\in\R^n$ for this 
$\epsilon$ such that the following holds for all transactions,
\begin{equation}
	\| \yv_t - \bar\wv\|_2 \leq   \epsilon, \ \ \forall t.
	\label{eq.rea1}
\end{equation}
Observe that this {\em realisability} condition is quite restrictive.
However, it is prooven below that the algorithm also works well (`robust')
when deviating  from this assumption, allowing for the presence of faulty transactions during operation. 
First, a technical result is given. 
This result - as illustrated in Fig. (\ref{fig.bound})  - is paramount.
\begin{Lemma}[AC-lemma]
	Let $a,c>0$ be positive terms.
	Assume $x\geq 0$.
	If for all $y\in\R$ holds that  
	\begin{equation}
		x + y \leq c\sqrt{a + 2y},
		\label{eq.xy.1}
	\end{equation}
	and 
	\begin{equation}
		y\geq -\frac{a}{2}.
		\label{eq.xy.2}
	\end{equation}
	Then
	\begin{equation}
		|y| \leq \max\left( \frac{a}{2}, c \sqrt{a + c^2} +c^2\right),
		\label{eq.xy.3}
	\end{equation}
	and 
	\begin{equation}
		x \leq 
		\max\left( 
			c\sqrt{2a} + \frac{a}{2}, \ 
			c\sqrt{a + 2(c\sqrt{a  + c^2} +c^2)} +  c \sqrt{a + c^2}  +c^2\right).
		\label{eq.xy.4}
	\end{equation}
	In case $a = O(c^2)$, one has that $x = O(c^2)$.
\end{Lemma}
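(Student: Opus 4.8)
The plan is to treat this as an elementary algebra problem about a fixed pair $(x,y)$ satisfying the two hypotheses. I read the ``for all $y$'' as fixing the particular $y$ that arises in the mistake-bound application, with $(\ref{eq.xy.2})$ merely guaranteeing the radicand $a+2y$ is nonnegative so that the right-hand side of $(\ref{eq.xy.1})$ is real; a literal universal quantifier would make the conclusion $|y|\le\cdots$ vacuous, so I would state this reading explicitly at the outset. The whole argument first bounds $|y|$ by a sign case-split, and then feeds the resulting bound back into $(\ref{eq.xy.1})$ to control $x$.

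For the bound on $|y|$ I would argue as follows. Since $x\ge 0$, inequality $(\ref{eq.xy.1})$ immediately yields $y \le c\sqrt{a+2y}$. If $y\le 0$, then $(\ref{eq.xy.2})$ gives $|y|=-y\le a/2$ directly. If $y>0$, both sides of $y\le c\sqrt{a+2y}$ are nonnegative, so I may square to obtain the quadratic inequality $y^2-2c^2y-c^2a\le 0$; its larger root is $c^2+c\sqrt{c^2+a}$, hence $|y|=y\le c\sqrt{a+c^2}+c^2$. Taking the maximum over the two cases reproduces exactly $(\ref{eq.xy.3})$.

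For the bound on $x$, rearranging $(\ref{eq.xy.1})$ gives $x\le c\sqrt{a+2y}-y$; using $y\le |y|$ inside the radical and $-y\le |y|$ in the linear term yields $x\le c\sqrt{a+2|y|}+|y|$. The key observation is that $M\mapsto c\sqrt{a+2M}+M$ is increasing (its derivative $c/\sqrt{a+2M}+1$ is positive for $a+2M\ge 0$), so I may substitute the bound $|y|\le M:=\max\!\left(a/2,\ c\sqrt{a+c^2}+c^2\right)$ from $(\ref{eq.xy.3})$ and distribute the maximum through this monotone map. The branch $M=a/2$ then produces $c\sqrt{2a}+a/2$, while the branch $M=c\sqrt{a+c^2}+c^2$ produces the second term of $(\ref{eq.xy.4})$, so the two together give precisely the stated $\max$. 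Finally the asymptotic claim is immediate: if $a=O(c^2)$ then $a+c^2=O(c^2)$, so $c\sqrt{a+c^2}=O(c^2)$ and hence $M=O(c^2)$; substituting into $c\sqrt{a+2M}+M$ with $\sqrt{a+2M}=O(c)$ gives $x=O(c^2)$.

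I do not expect a genuine obstacle, since the content is bookkeeping. The two points needing care are the quadratic step, where one must verify $y>0$ before squaring and then select the correct (larger) root, and the derivation of $(\ref{eq.xy.4})$, where the monotonicity remark is exactly what lets me avoid a second full case analysis and land on the stated expression; absent that observation, the natural temptation would be to optimize $c\sqrt{a+2y}-y$ over $y$, which gives the cleaner but differently-shaped bound $(a+c^2)/2$ rather than the author's form.
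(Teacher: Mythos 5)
Your proposal is correct and follows essentially the same route as the paper's own proof: use $x\geq 0$ to reduce to $y\leq c\sqrt{a+2y}$, bound $y$ via the positive root of the associated quadratic, combine with the lower bound $y\geq -a/2$ to get the bound on $|y|$, and substitute back into $x\leq c\sqrt{a+2|y|}+|y|$ with a case split over the max. The only differences are matters of rigor in your favor — you justify the squaring step with an explicit sign case-split and the final substitution with a monotonicity remark, where the paper argues ``by inspection'' of the two sides (illustrated by a figure) — so there is nothing to correct.
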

\begin{proof}
	First, using that $x\geq 0$, gives 
	\begin{equation}
		y \leq c\sqrt{a + 2y}.
		\label{eq.lmmxy.1}
	\end{equation}
	By inspection of either term of eq. (\ref{eq.lmmxy.1}) in terms of $y$ (see Figure (\ref{fig.bound}) 
	for an example of choices for $a,c$), one finds that one needs solving for the positive root of $y$ 
	in the equation resulting from replacing 
	$'\leq'$ by $'='$,
	or 
	\begin{equation}
		y = c\sqrt{a+2y} 
		\Leftrightarrow 		
		y^2 = c^2a+2c^2y \\
		\Leftrightarrow 		
		y^2 - 2c^2y  - c^2a=0, \\
		\label{eq.lmmxy.}
	\end{equation}
	and thus 
	\begin{equation}	
		y=c \sqrt{a + c^2} +c^2.
		\label{eq.lmmxy.}
	\end{equation}
	From this, it follows that 
	\begin{equation}
		y \leq c \sqrt{a + c^2} +c^2,
		\label{eq.lmmxy.}
	\end{equation}
	for whatever choice of $x$ in eq. (\ref{eq.xy.1}).
	Using the lowerbound of eq. (\ref{eq.xy.2}) as well, gives 
	\begin{equation}
		|y| \leq \max\left( \frac{a}{2}, c \sqrt{a + c^2} +c^2\right),
		\label{eq.lmmxy.}
	\end{equation}
	and thus
	\begin{equation}
		x \leq c\sqrt{a + 2|y|}+ |y| \\
		\Rightarrow
		x \leq 
			\begin{cases}
			c\sqrt{2a} + \frac{a}{2}& \\
			c\sqrt{a + 2(c\sqrt{a  + c^2} +c^2)} &\\ 
			\hspace{+10mm}+  c \sqrt{a + c^2}  +c^2. & \\
			\end{cases}
		\label{eq.lmmxy.}
	\end{equation}
	This gives the result.
	In case $a=O(c^2)$, one has in brief that $x=O(c^2)$.
\end{proof}

\begin{figure}[htbp] 
   \centering
   \includegraphics[width=5in]{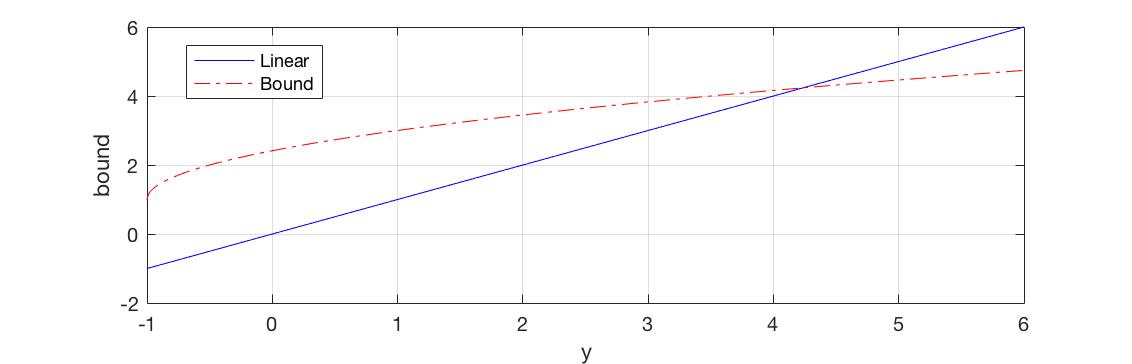} 
   \caption{Illustration of the AC-Lemma: when a quantity $y$ (blue solid line) is bounded 
   by a constant plus the square-root of itself (red dashed line), the quantity must be bounded (here $y\leq 4.2$).}
   \label{fig.bound}
\end{figure}

\begin{Lemma}
	Assume that 
	a fixed $\bar\wv$ exists for given $\epsilon$ 
	such that eq. (\ref{eq.rea1}) holds for all transactions.
	Let $\{\gamma_t>0\}_t$ be any sequence, then
	\begin{equation}
		\sum_{s=1}^t d_s \gamma_s  \vv_s^T\wv_{s-1} \geq 
		-\frac{1}{2}\sum_{s=1}^t d_s^2 \gamma_s^2. 
		\label{eq.lower}
	\end{equation}
\end{Lemma}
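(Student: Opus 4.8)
The plan is to run the standard perceptron-style \emph{potential function} argument, tracking the squared norm $\|\wv_t\|_2^2$ of the running estimate. The crucial observation is that the two branches of step (3) in Alg.~(\ref{alg.fado1}) can be written as a single update: since $\wv_s = \wv_{s-1}$ exactly when $d_s = 0$ and $\wv_s = \wv_{s-1} + \gamma_s \vv_s$ when $d_s = 1$, one has uniformly $\wv_s = \wv_{s-1} + d_s \gamma_s \vv_s$ for every $s$. The fact that $\vv_s$ is only defined when an alarm is raised causes no trouble, as it always appears multiplied by the indicator $d_s$.

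First I would expand the squared norm of one update step. Because $\vv_s$ is a unit vector whenever $d_s = 1$, so that $d_s^2 \|\vv_s\|_2^2 = d_s^2$, I obtain
\begin{equation}
\|\wv_s\|_2^2 = \|\wv_{s-1}\|_2^2 + 2 d_s \gamma_s \vv_s^T \wv_{s-1} + d_s^2 \gamma_s^2 .
\end{equation}
Summing this identity over $s = 1, \dots, t$ telescopes the left-hand side, yielding
\begin{equation}
\|\wv_t\|_2^2 - \|\wv_0\|_2^2 = 2 \sum_{s=1}^t d_s \gamma_s \vv_s^T \wv_{s-1} + \sum_{s=1}^t d_s^2 \gamma_s^2 .
\end{equation}

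To finish I would invoke the initialisation $\wv_0 = 0_d$ together with the trivial nonnegativity $\|\wv_t\|_2^2 \geq 0$. These two facts turn the telescoped identity into the inequality $0 \leq 2 \sum_{s=1}^t d_s \gamma_s \vv_s^T \wv_{s-1} + \sum_{s=1}^t d_s^2 \gamma_s^2$, which rearranges immediately into the claimed bound (\ref{eq.lower}).

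The argument is essentially a one-line computation, so there is no serious obstacle; the only points requiring care are the uniform rewriting of the update across the two branches and the observation that $\|\vv_s\|_2 = 1$, which is what makes the quadratic term clean. It is worth remarking that the realisability hypothesis on $\bar\wv$ is in fact \emph{not} used anywhere in this argument: the inequality holds for any sequence $\{\gamma_t > 0\}_t$ and any stream $\{\yv_t\}$, since it rests solely on the nonnegativity of a squared norm. I would expect the companion bound complementing this one to be where the realisability assumption actually enters.
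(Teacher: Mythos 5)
Your proof is correct and is essentially identical to the paper's: both collapse the two branches into the single recursion $\wv_s = \wv_{s-1} + d_s\gamma_s\vv_s$, expand $\|\wv_s\|_2^2$ using $\|\vv_s\|_2 = 1$, telescope from $\wv_0 = 0$, and conclude from $\|\wv_t\|_2^2 \geq 0$. Your closing remark that the realisability hypothesis is never used is also accurate — the paper's proof likewise relies only on nonnegativity of the squared norm.
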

This means that the terms $\{\sum_{s=1}^t d_s \gamma_s  \vv_s^T\wv_{s-1} \}_t$
cannot become too negative.

\begin{proof}
	This follows by studying the positive terms $\{\|\wv_t\|_2\}_t$.
	The algorithm can be summarised as the recursion
	\begin{equation}
		\wv_t = \wv_{t-1} + d_t\gamma_t \vv_t = 
		\begin{cases}
		 	\wv_{t-1} +\gamma_t \vv_t & \ \mbox{if \ } d_t=1 \\
			\wv_{t-1} & \ \mbox{if \ } d_t=0,
		\end{cases}
		\label{eq.proof0.}
	\end{equation}
	which is initialised as $\wv_0=0_n\in\R^n$.
	The algorithm implements the recursion for any $t>0$ that 
	\begin{equation}
		\wv_t = \wv_{t-1} + d_t \gamma_t \vv_t.
		\label{eq.proof0.}
	\end{equation}	
	From this it follows that for any $t$ one has 
	\begin{multline}
		 0\leq \|\wv_t\|_2^2  
		 = 
		 \|\wv_{t-1} + d_t\gamma_t\vv_t\|_2^2 \\
		=
		 \|\wv_{t-1}\|_2^2 + d_t^2 \gamma_t^2\|\vv_t\|_2^2 
		 + 2d_t \gamma_t \wv_{t-1}^T\vv_t\\
		=
		\sum_{s=1}^t d_s^2 \gamma_s^2 + 
		2\sum_{s=1}^t d_s\gamma_s ( \wv_{s-1}^T\vv_s).
		\label{eq.proof0.}
	\end{multline}	
	Reshuffling terms gives the result.
\end{proof}
Now an upperbound is given, meaning that the terms 
$\{\sum_{s=1}^t d_s \gamma_s  \vv_s^T\wv_{s-1} \}_t$
cannot become too large.

\begin{Lemma}
	Assume that 
	a fixed $\bar\wv$ exists for given $\epsilon$ 
	such that eq. (\ref{eq.rea1}) holds for all transactions.
	Let $\{\gamma_t>0\}_t$ be any sequence, then
	\begin{equation}
		\sum_{s=1}^t d_s \gamma_s  \vv_s^T\wv_{s-1}
		\leq  
		\|\bar\wv\|_2^2 + \|\bar\wv\|_2\sqrt{\sum_{s=1}^t d_s\gamma_s^2 + \|\bar\wv\|_2^2}.
		\label{eq.proof0.11}
	\end{equation}
\end{Lemma}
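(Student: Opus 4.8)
The plan is to match the two-sided strategy already in place. Writing $S_t=\sum_{s=1}^t d_s\gamma_s\vv_s^T\wv_{s-1}$ and $G_t=\sum_{s=1}^t d_s\gamma_s^2$, the preceding Lemma bounds $S_t$ from below, this Lemma bounds it from above, and the AC-lemma then converts a self-referential estimate into the explicit bound claimed here.

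The heart of the argument is a pointwise geometric inequality holding at every alarm. First I would prove that whenever $d_s=1$ one has $\vv_s^T\wv_{s-1}\leq\vv_s^T\bar\wv$. To do so, put $\av=\yv_s-\wv_{s-1}$ and $\bv=\wv_{s-1}-\bar\wv$, so that $\yv_s-\bar\wv=\av+\bv$, and expand $\|\av+\bv\|_2^2$. The realisability assumption (\ref{eq.rea1}) gives $\|\av+\bv\|_2^2\leq\epsilon^2$, whereas an alarm forces $\|\av\|_2\geq\epsilon$; subtracting, the two $\epsilon^2$ contributions cancel and one is left with $\av^T\bv\leq-\tfrac12\|\bv\|_2^2\leq0$. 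Dividing through by $\|\av\|_2>0$ and using $\vv_s=\av/\|\av\|_2$ yields $\vv_s^T(\wv_{s-1}-\bar\wv)\leq0$, which is exactly the claimed inequality.

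The remainder is assembly. Multiplying the per-step inequality by $d_s\gamma_s\geq0$ and summing over $s$ (the statement is trivial when $d_s=0$) gives $S_t\leq\sum_{s=1}^t d_s\gamma_s\vv_s^T\bar\wv=\wv_t^T\bar\wv$, where the last equality uses $\wv_t=\sum_{s=1}^t d_s\gamma_s\vv_s$ (since $\wv_0=0$). Cauchy--Schwarz then gives $S_t\leq\|\wv_t\|_2\|\bar\wv\|_2$. Substituting the identity $\|\wv_t\|_2^2=G_t+2S_t$ already obtained in the proof of the lower-bound Lemma (recalling $d_s^2=d_s$) turns this into the self-referential estimate $S_t\leq\|\bar\wv\|_2\sqrt{G_t+2S_t}$.

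Finally I would apply the AC-lemma with $y=S_t$, $x=0$, $c=\|\bar\wv\|_2$ and $a=G_t$. Its side condition $y\geq-a/2$ is precisely the lower bound $S_t\geq-\tfrac12 G_t$ furnished by the previous Lemma, so its conclusion $y\leq c\sqrt{a+c^2}+c^2$ reads $S_t\leq\|\bar\wv\|_2\sqrt{G_t+\|\bar\wv\|_2^2}+\|\bar\wv\|_2^2$, which is exactly the asserted bound. I expect the pointwise geometric inequality to be the only delicate step: it is the sole place where realisability and the alarm condition are played against one another, and the argument hinges on the alarm making $\|\av\|_2\geq\epsilon$ dominate the realisability radius, so that the inner product comes out nonpositive and the cancellation of the $\epsilon^2$ terms goes through.
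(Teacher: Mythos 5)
Your proposal is correct and mirrors the paper's own proof: both sandwich $\sum_{s=1}^t d_s\gamma_s\vv_s^T\wv_{s-1}$ above by $\wv_t^T\bar\wv$, invoke Cauchy--Schwarz together with the expansion $\|\wv_t\|_2^2=\sum_{s=1}^t d_s\gamma_s^2+2\sum_{s=1}^t d_s\gamma_s\vv_s^T\wv_{s-1}$, and close with the AC-Lemma at $x=0$, $a=\sum_{s=1}^t d_s\gamma_s^2$, $c=\|\bar\wv\|_2$. The only (immaterial) difference is how the per-alarm inequality $\vv_s^T\wv_{s-1}\leq\vv_s^T\bar\wv$ is obtained: the paper chains the Cauchy--Schwarz bound $\vv_s^T(\bar\wv-\yv_s)\geq-\epsilon$ with the alarm identity $\vv_s^T(\yv_s-\wv_{s-1})=\|\yv_s-\wv_{s-1}\|_2\geq\epsilon$, whereas you derive the same fact by expanding squared norms.
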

\begin{proof}
	This follows from studying the terms $\{\bar\wv^T\wv_t\}_t$.
	First
	\begin{equation}
		d_t(\vv_t^T\bar\wv - \vv_t^T\yv_t) 
		\geq 
		-d_t^2\|\bar\wv-\yv_t\|_2 
		\geq 
		-d_t^2\epsilon,
		\label{eq.proof0.}
	\end{equation}	
	by construction, and hence that
	\begin{equation}
		\wv_t^T\bar\wv \geq 
			\sum_{s=1}^t d_s \gamma_s  \vv_s^T\yv_s  - 
			\sum_{s=1}^t d^2_s \gamma_s \epsilon.
		\label{eq.proof0.}
	\end{equation}	
	Since in cases $t$ where a mistake was reported, or $d_t=1$, one has 
	\begin{equation}
		 \vv_t^T\yv_t  - \vv_t^T\wv_{t-1} = \|\yv_t - \wv_{t-1}\|_2 \geq \epsilon, 
		\label{eq.proof0.}
	\end{equation}	
	or
	\begin{equation}
		 \vv_t^T\yv_t  -  \epsilon \geq \vv_t^T\wv_{t-1}, 
		\label{eq.proof0.}
	\end{equation}	
	one also has 
	\begin{equation}
		\wv_t^T\bar\wv \geq 
			\sum_{s=1}^t d_s \gamma_s  (\vv_s^T\wv_{s-1}). 
		\label{eq.proof0.}
	\end{equation}	
	Conversely, one has using Cauchy-Schwarz' inequality that 
	\begin{equation}
		\left|\wv_t^T\bar\wv\right| \leq \|\bar\wv\|_2  \|\wv_t\|_2.
		\label{eq.proof0.}
	\end{equation}	
	Since 
	\begin{multline}
		 \|\wv_t\|_2^2  
		 = 
		 \|\wv_{t-1} + d_t\gamma_t\vv_t\|_2^2 \\
		=
		 \|\wv_{t-1}\|_2^2 + d_t^2\gamma_t^2\|\vv_t\|_2^2 
		 + 2d_t \gamma_t \wv_{t-1}^T\vv_t\\
		=
		\sum_{s=1}^t d_s^2 \gamma_s^2 + 
		2\sum_{s=1}^t d_s\gamma_s ( \wv_{s-1}^T\vv_s).
		\label{eq.proof0.}
	\end{multline}	
	Putting together those inequalities gives 
	\begin{equation}
		\sum_{s=1}^t d_s \gamma_s  \vv_s^T\wv_{s-1}
		\leq 
		\wv_t^T\bar\wv \\ 
		\leq 
		\|\bar\wv\|_2 
		\sqrt{\sum_{s=1}^t d_s^2\gamma_s^2 + 
			2\sum_{s=1}^t d_s\gamma_s (\vv_s^T\wv_{s-1})},
		\label{eq.proof0.}
	\end{equation}	
	then using the AC-Lemma with $x=0$,
	$a=\sum_{s=1}^t d_s^2\gamma_s^2$ and 
	$c=\|\bar\wv\|_2$ gives the result.
\end{proof}

 Now, we consider a specific function of $\gamma_s>0$:
 \begin{Lemma}
	Let $\tau>0$  be a strictly positive, finite constant, and 
	\begin{equation}
		\gamma_t = \frac{1}{m_t^{1/2+\tau}} \ \ \forall t,
		\label{eq.gamb}
	\end{equation}
	then
	\begin{equation}
		\sum_{s=1}^t d_s\gamma_s^2 \leq  \zeta(1+2\tau) < \infty,
		\label{eq.gam2}
	\end{equation}
	where $\zeta(\cdot)$ is the Riemann zeta function, and one has for all $t>0$ that
	\begin{equation}
		\sum_{s=1}^t d_s\gamma_s \geq  cm_t^{1/2-\tau}.
		\label{eq.gam3}
	\end{equation}
	with $c>0$ a universal constant.
\end{Lemma}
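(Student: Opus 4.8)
The plan is to exploit the fact that the step sizes $\gamma_s$ depend on the running alarm count $m_s$ alone, which lets me reindex both sums by the alarm number and reduce the two claims to elementary facts about $p$-series.

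First I would establish the key reindexing. Since $d_s\in\{0,1\}$ and $m_s=\sum_{r=1}^s d_r$, the counter $m_s$ is incremented by exactly one at each time step where an alarm is raised and is unchanged otherwise. Consequently, writing $s_1<s_2<\dots<s_{m_t}$ for the time steps at which $d_s=1$, one has $m_{s_k}=k$ for $k=1,\dots,m_t$, and hence $\gamma_{s_k}=k^{-(1/2+\tau)}$ by the definition in eq.~(\ref{eq.gamb}). Because every term with $d_s=0$ drops out of both sums, this yields the two identities $\sum_{s=1}^t d_s\gamma_s^2=\sum_{k=1}^{m_t}k^{-(1+2\tau)}$ and $\sum_{s=1}^t d_s\gamma_s=\sum_{k=1}^{m_t}k^{-(1/2+\tau)}$.

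With these in hand the first bound is immediate: since $\tau>0$ we have $1+2\tau>1$, so $\sum_{k=1}^{m_t}k^{-(1+2\tau)}$ is a partial sum of a convergent $p$-series and is therefore dominated by its limit, $\sum_{k=1}^\infty k^{-(1+2\tau)}=\zeta(1+2\tau)<\infty$, which is exactly eq.~(\ref{eq.gam2}). For the second bound I would avoid any integral comparison and instead bound each summand from below termwise: because $x\mapsto x^{-(1/2+\tau)}$ is decreasing and $1\le k\le m_t$, one has $k^{-(1/2+\tau)}\ge m_t^{-(1/2+\tau)}$. Summing the $m_t$ terms then gives $\sum_{k=1}^{m_t}k^{-(1/2+\tau)}\ge m_t\cdot m_t^{-(1/2+\tau)}=m_t^{1/2-\tau}$, which is eq.~(\ref{eq.gam3}) with the universal constant $c=1$ (indeed independent of $\tau$, of $t$, and of the transaction stream, so any $c\le 1$ works for all $t>0$).

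The only genuine obstacle is making the reindexing rigorous, and in particular checking the two boundary issues it conceals: that $m_{s_k}=k$ holds with no off-by-one, so that $\gamma_{s_k}$ really equals $k^{-(1/2+\tau)}$; and that the degenerate case $m_t=0$ — in which $\gamma_t$ in eq.~(\ref{eq.gamb}) is nominally undefined — is harmless, since the offending term carries the factor $d_t=0$ and both sides of eq.~(\ref{eq.gam3}) then vanish. Once the reindexing is settled, both inequalities are routine statements about the series $\sum_k k^{-p}$ and require no further computation.
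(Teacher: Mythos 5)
Your proof is correct. There is, in fact, no proof in the paper to compare it against: the lemma is stated bare and the text moves straight on to the margin assumption and Theorem 1, so your argument fills a gap rather than paralleling one. The route you take --- reindexing both sums by the alarm counter, so that the $k$-th alarm contributes $\gamma_{s_k}=k^{-(1/2+\tau)}$ precisely because $m_{s_k}=k$, and then reading off
\begin{equation*}
\sum_{s=1}^t d_s\gamma_s^2=\sum_{k=1}^{m_t}k^{-(1+2\tau)}\leq \zeta(1+2\tau),
\qquad
\sum_{s=1}^t d_s\gamma_s=\sum_{k=1}^{m_t}k^{-(1/2+\tau)}\geq m_t\cdot m_t^{-(1/2+\tau)}=m_t^{1/2-\tau}
\end{equation*}
--- is the natural (and surely intended) one, and it even sharpens the stated conclusion by exhibiting the explicit universal constant $c=1$. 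One small caveat concerning your closing remark: in the degenerate case $m_t=0$, both sides of the lower bound vanish only when $\tau<1/2$; for $\tau\geq 1/2$ the quantity $m_t^{1/2-\tau}$ at $m_t=0$ is not zero (indeed not even defined). This is a blemish in the lemma's statement rather than in your proof, and it is immaterial in context, since the paper anyway restricts attention to $0<\tau<1/2$ for the bound of Theorem 1 to be meaningful.
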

%
In order to bound $m_t$, we need to make stronger assumptions.
For example, that there exists a $\mu>0$ such that
\begin{equation}
	\| \yv_t - \bar\wv\|_2 \leq  \epsilon - \mu, \ \ \forall t.
	\label{eq.rea2}
\end{equation}
That is, the detection rule achieves perfect performance with a non-zero margin $\mu$.
Then, we obtain actually the following stronger result:
\begin{Theorem}
	Assume that 
	a fixed $\bar\wv$ exists for given $\epsilon$ and $\mu>0$
	such that eq. (\ref{eq.rea2}) holds for all transactions $\yv_1, \dots,\yv_t, \dots$ presented.
	Let $\{\gamma_t>0\}_t$ be set as
	 \begin{equation}
		\gamma_t = \frac{1}{m_t^{1/2+\tau}}, \ \ \forall t>0,
		\label{eq.gam}
	\end{equation}
	then
	\begin{equation}
		m_t
		= 
		O\left(\left( \frac{\|\bar\wv\|_2^2}{\mu} \right)^{\frac{2}{1-2\tau}} \right) < \infty, \ \forall t>0.
	\label{eq.bound}
	\end{equation}
\end{Theorem}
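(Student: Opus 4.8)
The plan is to re-run the upper-bound computation of Lemma~3, but now to squeeze the extra strength of the margin assumption~(\ref{eq.rea2}) into the chain of inequalities, so that a term proportional to $\mu\sum_{s=1}^t d_s\gamma_s$ appears on the left. The AC-lemma then converts the resulting inequality into an upper bound on $\sum_{s=1}^t d_s\gamma_s$, and the lower bound of Lemma~4 turns that into the advertised bound on the mistake count $m_t$.

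Concretely, I would track $\wv_t^T\bar\wv = \sum_{s=1}^t d_s\gamma_s\,\vv_s^T\bar\wv$ (using $\wv_0=0_n$). For each alarm step ($d_s=1$) I split $\vv_s^T\bar\wv = \vv_s^T\yv_s + \vv_s^T(\bar\wv-\yv_s)$ and estimate the two pieces separately: the defining inequality of an alarm gives $\vv_s^T\yv_s \geq \vv_s^T\wv_{s-1} + \epsilon$, while Cauchy--Schwarz together with the margin condition~(\ref{eq.rea2}) gives $\vv_s^T(\bar\wv-\yv_s) \geq -\|\bar\wv-\yv_s\|_2 \geq -(\epsilon-\mu)$. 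Adding these, the two copies of $\epsilon$ cancel and leave $\vv_s^T\bar\wv \geq \vv_s^T\wv_{s-1}+\mu$. Summing with weights $d_s\gamma_s$ and combining with $\wv_t^T\bar\wv \leq \|\bar\wv\|_2\|\wv_t\|_2$ and the expansion $\|\wv_t\|_2^2 = \sum_{s=1}^t d_s^2\gamma_s^2 + 2\sum_{s=1}^t d_s\gamma_s\,\vv_s^T\wv_{s-1}$, one obtains, writing $S=\sum_{s=1}^t d_s\gamma_s\,\vv_s^T\wv_{s-1}$, $G=\sum_{s=1}^t d_s\gamma_s$ and $a=\sum_{s=1}^t d_s^2\gamma_s^2$,
\[
	S + \mu G \;\leq\; \|\bar\wv\|_2\sqrt{a + 2S},
\]
which is exactly the AC-lemma shape $x+y\leq c\sqrt{a+2y}$ with $x=\mu G\geq 0$, $y=S$ and $c=\|\bar\wv\|_2$.

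Next I would apply the AC-lemma. The hypothesis $y\geq -a/2$ it requires is precisely the content of Lemma~2, so it is available for free. This bounds $x=\mu G$. Here Lemma~4 does the quantitative work: it gives $a=\sum_{s=1}^t d_s\gamma_s^2 \leq \zeta(1+2\tau)$, a finite constant independent of $t$, so $a=O(c^2)$ and the asymptotic clause of the AC-lemma yields $\mu G = O(\|\bar\wv\|_2^2)$, i.e. $G = O(\|\bar\wv\|_2^2/\mu)$. Finally I would invoke the second half of Lemma~4, $G\geq c\,m_t^{1/2-\tau}$ with $c>0$ universal, to transfer the estimate onto the mistake count: $c\,m_t^{1/2-\tau}=O(\|\bar\wv\|_2^2/\mu)$, and raising both sides to the power $\tfrac{1}{1/2-\tau}=\tfrac{2}{1-2\tau}$ gives $m_t = O\big((\|\bar\wv\|_2^2/\mu)^{2/(1-2\tau)}\big)$, uniformly in $t$, as claimed.

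I expect the only genuinely delicate point to be the bookkeeping in the second paragraph: one must insert the margin in exactly the right place so that $\mu G$ emerges cleanly as the ``$x$'' slot of the AC-lemma rather than being absorbed into the error terms, and one must check that the self-referential dependence of $\gamma_s$ on the running count $m_s$ causes no trouble---which it does not, precisely because Lemma~4 has already controlled both $\sum_{s=1}^t d_s\gamma_s^2$ and $\sum_{s=1}^t d_s\gamma_s$ for this particular schedule. Everything after the inequality above (the AC-lemma and the final exponentiation) is mechanical once the estimate is put in the correct form.
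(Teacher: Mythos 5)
Your proposal is correct and follows essentially the same route as the paper's own proof: the same margin-strengthened inequality leading to $S+\mu G\leq\|\bar\wv\|_2\sqrt{a+2S}$, the same application of the AC-lemma with $x=\mu G$, $y=S$, $c=\|\bar\wv\|_2$, and the same use of the $\gamma_t$ schedule to convert the bound on $\sum_s d_s\gamma_s$ into the bound on $m_t$. If anything, your writeup is more explicit than the paper's, since you verify the AC-lemma hypothesis $y\geq -a/2$ via Lemma 2 and spell out the step $\sum_s d_s\gamma_s\geq c\,m_t^{1/2-\tau}$ from Lemma 4, which the paper compresses into ``by the specific choice of $\gamma_t$''.
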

Figure (\ref{fault2d}) gives an example run of the algorithm in $n=2$ dimensions.

  \begin{figure}[htbp]
	\begin{center}
		\includegraphics[width=0.6\textwidth]{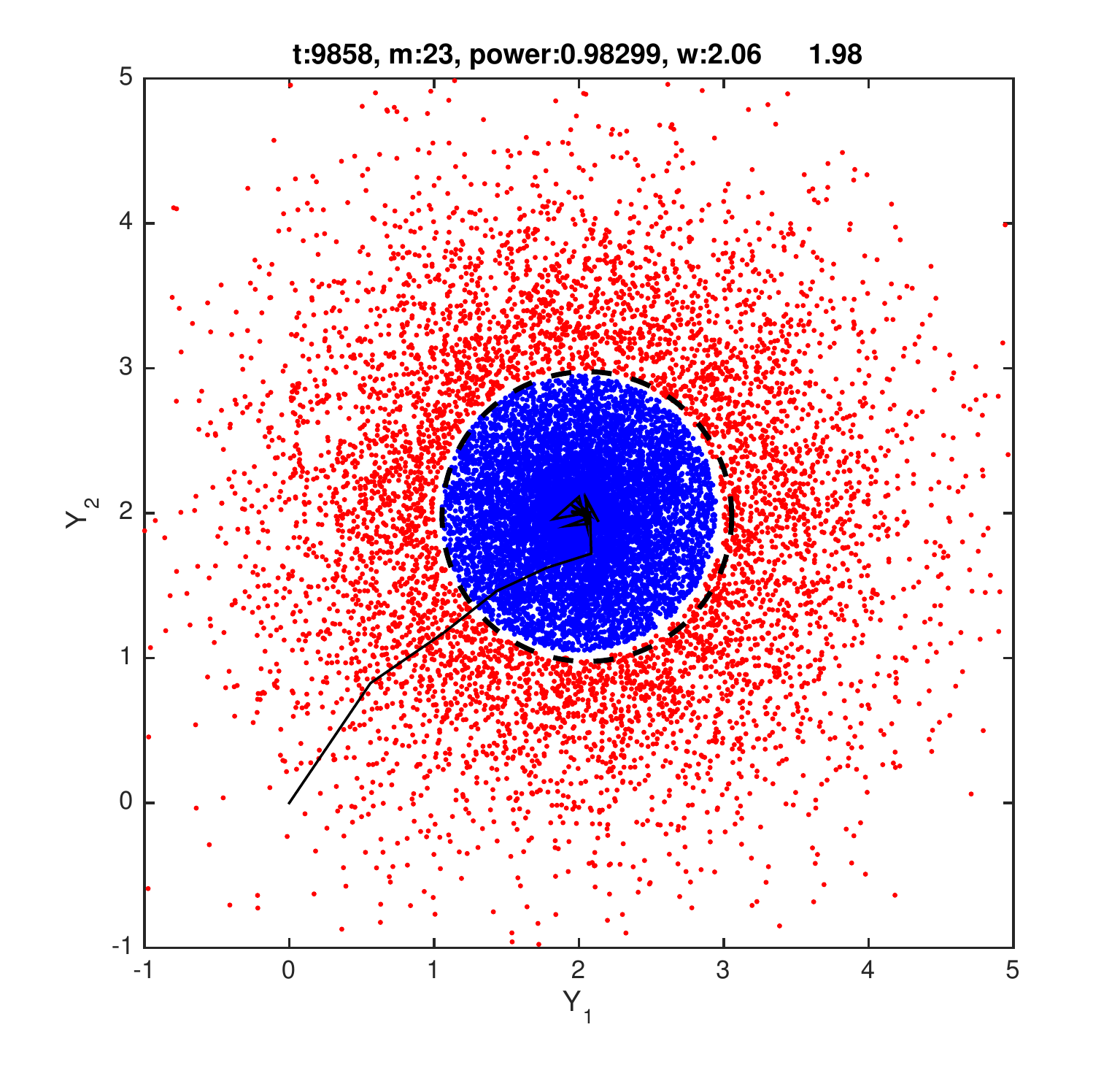}
	\end{center}
	\caption{\em 
		Example in 2D. Only $m=23$ of the $t=10.000$ `normal' 
		cases (blue dots) were erroneously decided to be faulty by the 
		FADO detection algorithm. 
		The final rule (dashed circle) detects	98.299\% of the outliers (red dots) as atypical.
		The solid path depicts the different centres the FADO algorithm has taken over time ($t=1, \dots, 10.000$).
		That is, the centre was first taken to be $(0,0)$ and evolved after $23$ mistakes
		 into the value $\wv_t=(2.06,1.98)$, close to $\bar\wv=(2,2)$.
	}
	\label{fault2d}
\end{figure}

This theorem implies  that only a limited number of false detections are made however many 
`normal' cases are presented to the FADO($\epsilon$) algorithm.
The $O(\cdot)$ suppresses the explicit statement of the (mild) constants in the bound.
These constants depend on the term `$\sum_{s=1}^t d_s\gamma_s^2<\infty$' as in Lemma 3, 
and hence directly on the choice of $\{\gamma_t\}_t$.

The argument implies that the larger one chooses $\tau$, the worse the behaviour.
However the constants in the bound become smaller for larger $\tau>0$, 
hence improving the bound in that sense. 
So, one needs to tune  $0<\tau<\frac{1}{2}$ to trade-off both effects 
optimally in the case at hand.
In practice this does not affect results too much, and the key property is 
that $\sum_t d_t\gamma_t^2<\infty$ while $\sum_t d_t\gamma_t\rightarrow\infty$.
This makes this result reminiscent to the beautiful result of \cite{robbins1951stochastic}.

Note that the bound of Theorem 1 is not dependent on the 
{\em size} of the domain of $\yv_t$. 
That is, the performance of the algorithm is not different if the data were to be rescaled.
This is a consequence of the fact that the update $\vv_t$ has unit norm,
(i.e., this is very similar to the robust {\em median} estimate),
rather than to allow for individual influences which are proportional to the norm of the mistake
(an approach which is more similar to the {\em mean} or least-squares estimate instead.)
This is useful, as in this setting 
one is interested in detecting outliers, and putting restrictions on the (size of)
possible data is not realistic.

\begin{proof} (of Theorem 1) 
	The proof goes along the same lines of the proof of Lemma 2,
	however strengthening the first inequality as 
	\begin{equation}
		d_t(\vv_t^T\bar\wv - \vv_t^T\yv_t) 
		\geq 
		-d_t^2\|\bar\wv-\yv_t\|_2 
		\geq 
		-d_t^2\epsilon + d_t^2\mu,
		\label{eq.proof4.}
	\end{equation}
	Working through the following inequalities then results in
	\begin{equation}
		\sum_{s=1}^t d_s \gamma_s  \vv_s^T\wv_{s-1}
		+ \sum_{s=1}^t d_s \gamma_s \mu \\
		\leq 
		\|\bar\wv\|_2 
		\sqrt{\sum_{s=1}^t d_s^2\gamma_s^2 + 
			2\sum_{s=1}^t d_s\gamma_s (\vv_s^T\wv_{s-1})},
		\label{eq.proof0.}
	\end{equation}	
	Application of the AC-Lemma for 
	$x=\sum_{s=1}^t d_s \gamma_s \mu$,
	$y=\sum_{s=1}^t d_s \gamma_s  \vv_s^T\wv_{s-1}$
	and 
	$a=\sum_{s=1}^t d_s^2\gamma_s^2$,
	$c=\|\bar\wv\|_2$,
	gives then
	\begin{equation}
		\sum_{s=1}^t d_s \gamma_s \mu 
		\leq 
		O\left(\|\bar\wv\|_2^2\right).
		\label{eq.proof0.}
	\end{equation}	
	and by the specific choice of $\gamma_t$ that
	\begin{equation}
		m_t
		= 
		O\left(\left( \frac{\|\bar\wv\|_2^2}{\mu} \right)^{\frac{2}{1-2\tau}} \right),
		\label{eq.proof0.}
	\end{equation}	
	as desired.
\end{proof}

  \begin{figure}[htbp]
	\begin{center}
		\includegraphics[width=0.7\textwidth]{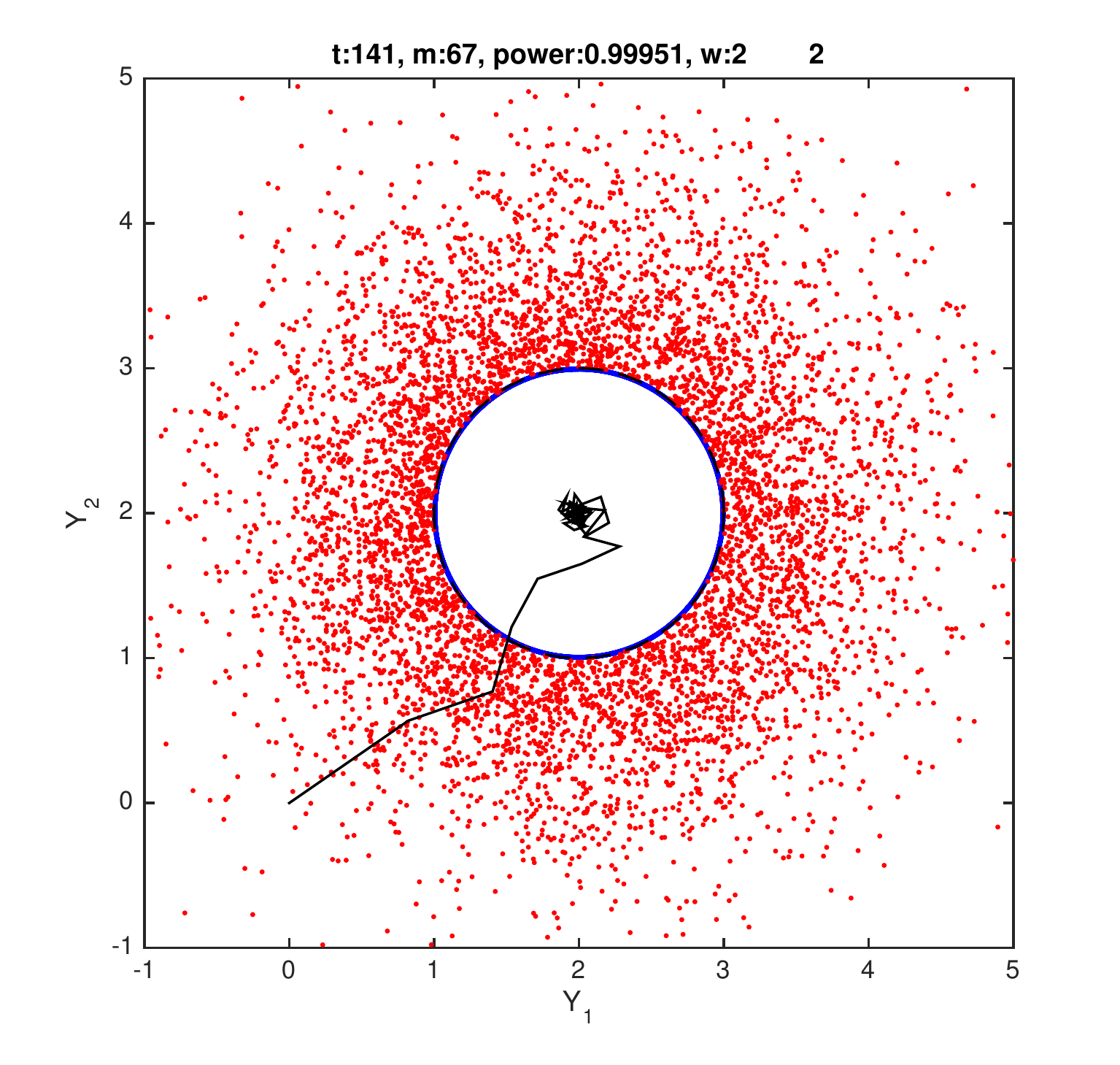} \\
		\includegraphics[width=0.7\textwidth]{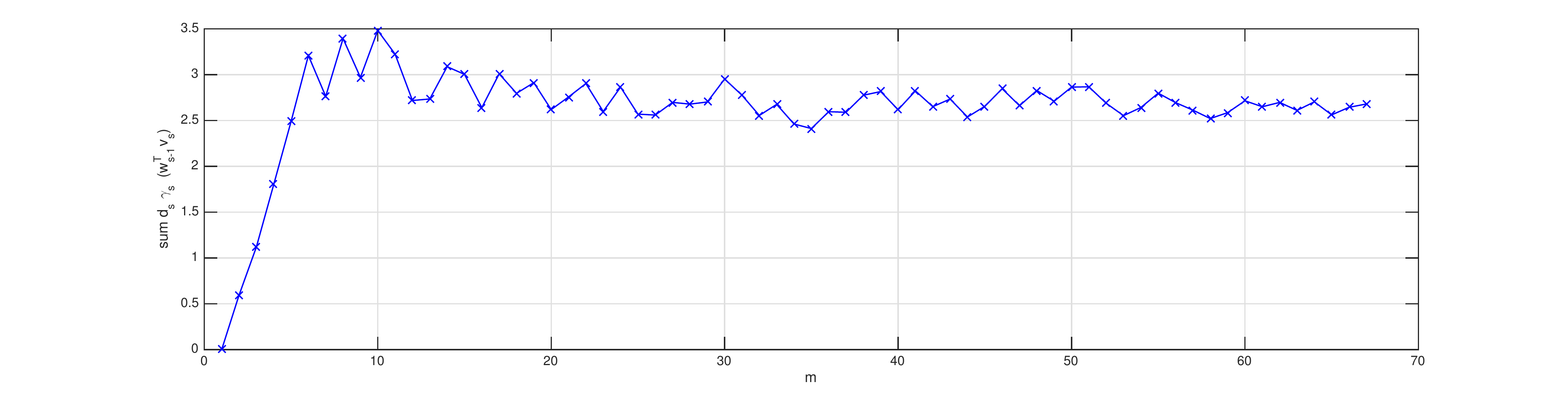} \\
		\includegraphics[width=0.7\textwidth]{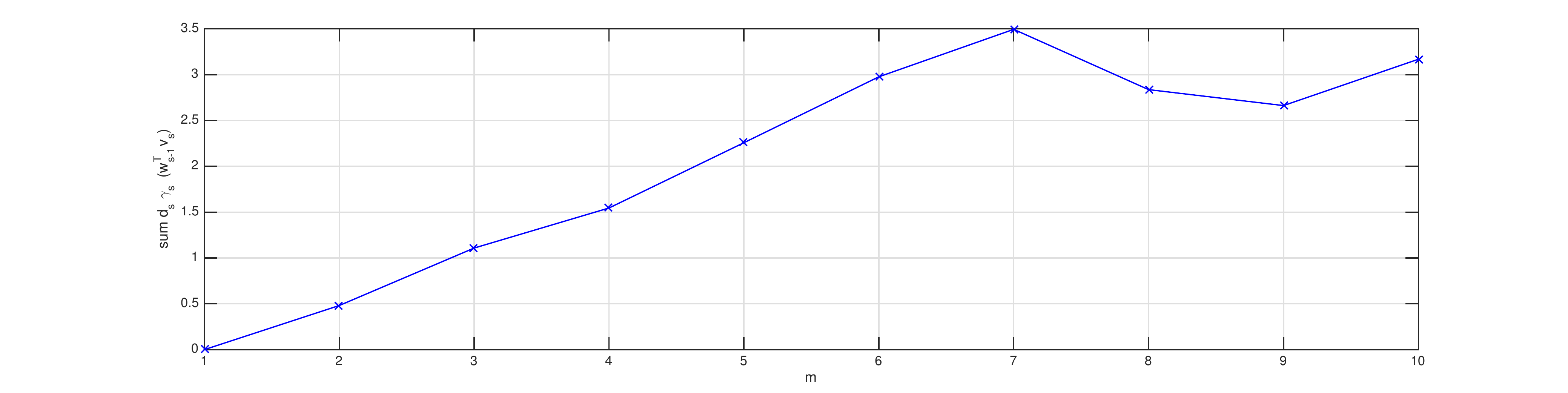}

	\end{center}
	\caption{\em 
		Example in 2D where the $\{\yv_t\}$ are on a circle (blue dots)
		quite close to the decision boundary ($\mu= 0.001$).
		(subplot a): The red dots denote the abnormal transactions not seen, but to be detected. 
		This results in $m=67$ mistakes.
		Subplot (b) reports the evolution of the term $\sum_s d_s\gamma_s (\wv_{s-1}^T\vv_s)$,
		indicating that it remains bounded.
		After these $m=67$ corrections, a decision function is obtained with almost perfect 
		detection capabilities (i.e. the power for detecting the red dots in 
		subplot a as faulty, is $99.951\%$).		
		Whenever this margin $\mu$ is taken larger, $m$ decreases as e.g. seen in subplot (c), 
		(same design, but here $\mu=0.1$, and only $m=10$ false detections are made).
	}
	\label{fault2db}
\end{figure}

\subsection{Power of FADO($\epsilon$)}

Now, we proof that the algorithm has sufficient power,
i.e. it detects items $\yv$ which violate eq. (\ref{eq.rea1}) sufficiently much.
Hereto we consider the following scenario. 
Let $T>0$ transactions be given satisfying the assumption (\ref{eq.rea1}) for $(\bar\wv,\epsilon)$. 
Then we study the performance of the FADO($\epsilon$) 
algorithm on the $(T+1)$th transaction $\yv_{T+1}$ 
which is {\em significantly} atypical: i.e. there exist a $\delta\geq 0$ so that 
\begin{equation}
	\| \bar\wv - \yv_{T+1}\|_2 = \epsilon + \delta.
	\label{eq.sa}
\end{equation}
Then we ask the question: {\em How large needs this $\delta\geq0$ to be so that 
FADO($\epsilon$) 
detects $\yv_{T+1}$ as a mistake?}, i.e. 
\begin{equation}
	\| \wv_T - \yv_{T+1}\|_2 \geq \epsilon.
	\label{eq.sa2}
\end{equation}
The smaller this $\delta\geq 0$ is, the more {\em powerful} the algorithm is.
Observe that a trivial solution $(\wv_T,\epsilon_T)$ scoring  {\em all possible}
transactions as {\em faulty}, has perfect power ($\delta=0$) in this sense.
Such solution will also have an unbounded number of false positives 
- as characterised by Theorem 1 - and is hence not very useful. 
The optimal trade-off between false positives and false negatives 
(or number of mistakes versus power, under assumption eq. (\ref{eq.rea1})) 
is dictated by the application at hand.

\begin{Theorem}
	Given a fixed $T>0$.
	Assume the existence of $\bar\wv$ for a given $\epsilon$ 
	such that eq. (\ref{eq.rea1}) holds for all transactions $\yv_1, \dots, \yv_T$.
	Then the FADO($\epsilon$) algorithm will detect the transaction $\yv_{T+1}$ as faulty 
	whenever
	\begin{equation}
		\| \bar\wv - \yv_{T+1}\|_2 = \epsilon + \delta,
		\label{eq.sa3}
	\end{equation}
	and 
	\begin{equation}
		\delta \geq O\left(\frac{\|\bar\wv\|_2^2}{m_T^{\frac{1-2\tau}{4}} }\right).
		\label{eq.sa4}
	\end{equation}
	That is, the algorithm is $\delta$-powerful.	
\end{Theorem}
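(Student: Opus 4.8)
The plan is to reduce the detection question to a single geometric estimate and then to control that estimate with quantities already bounded in the preceding lemmas. First I would observe that, by the triangle inequality, $\|\wv_T - \yv_{T+1}\|_2 \geq \|\bar\wv - \yv_{T+1}\|_2 - \|\bar\wv - \wv_T\|_2 = \epsilon + \delta - \|\bar\wv - \wv_T\|_2$. Hence the alarm condition eq. (\ref{eq.sa2}), namely $\|\wv_T - \yv_{T+1}\|_2 \geq \epsilon$, is guaranteed as soon as $\delta \geq \|\bar\wv - \wv_T\|_2$. The whole theorem therefore reduces to establishing the bound $\|\bar\wv - \wv_T\|_2 = O\!\left(\|\bar\wv\|_2^2 / m_T^{(1-2\tau)/4}\right)$, after which one simply sets $\delta$ equal to this quantity.

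Next I would expand $\|\bar\wv - \wv_T\|_2^2 = \|\bar\wv\|_2^2 - 2\bar\wv^T\wv_T + \|\wv_T\|_2^2$ and substitute the two identities already used in the lower- and upper-bound lemmas: the expansion $\|\wv_T\|_2^2 = \sum_{s=1}^T d_s\gamma_s^2 + 2\sum_{s=1}^T d_s\gamma_s(\vv_s^T\wv_{s-1})$, together with $\bar\wv^T\wv_T = \sum_{s=1}^T d_s\gamma_s(\vv_s^T\bar\wv)$. Writing $G = \sum_{s=1}^T d_s\gamma_s\,\vv_s^T(\bar\wv - \wv_{s-1}) \geq 0$ for the accumulated ``progress'' of the iterates towards $\bar\wv$ (nonnegative because $\vv_s^T\bar\wv \geq \vv_s^T\wv_{s-1}$ on every mistake, exactly as in the upper-bound lemma), the two substitutions collapse into the exact relation $\|\bar\wv - \wv_T\|_2^2 = \|\bar\wv\|_2^2 + \sum_{s=1}^T d_s\gamma_s^2 - 2G$.

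To extract the decay in $m_T$ I would first note that $\|\bar\wv - \wv_T\|_2^2 \geq 0$ already forces $G \leq \tfrac12\big(\|\bar\wv\|_2^2 + \sum_{s=1}^T d_s\gamma_s^2\big) = O(\|\bar\wv\|_2^2)$, using $\sum_{s=1}^T d_s\gamma_s^2 \leq \zeta(1+2\tau)$ from the $\gamma_t$-lemma. Realisability then lets me lower-bound each progress term: writing $\rho_s = \|\yv_s - \wv_{s-1}\|_2 \geq \epsilon$, the constraint $\|\yv_s - \bar\wv\|_2 \leq \epsilon$ of eq. (\ref{eq.rea1}) gives $\vv_s^T(\bar\wv - \wv_{s-1}) \geq \|\bar\wv - \wv_{s-1}\|_2^2/(2\rho_s)$. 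Combining these yields $\sum_{s=1}^T d_s\gamma_s\,\|\bar\wv - \wv_{s-1}\|_2^2/(2\rho_s) \leq G = O(\|\bar\wv\|_2^2)$, so the $\gamma_s$-weighted average of the squared distances $\|\bar\wv - \wv_{s-1}\|_2^2$ is $O\!\big(\|\bar\wv\|_2^2 / \sum_{s=1}^T d_s\gamma_s\big)$. Invoking the lower bound $\sum_{s=1}^T d_s\gamma_s \geq c\, m_T^{1/2-\tau}$ of the $\gamma_t$-lemma turns this into an $O\!\big(\|\bar\wv\|_2^2 / m_T^{(1-2\tau)/2}\big)$ bound on the average, and a square root produces the claimed $O\!\big(\|\bar\wv\|_2^2 / m_T^{(1-2\tau)/4}\big)$ rate for the distance (bounding $\|\bar\wv\|_2 \leq \|\bar\wv\|_2^2$ when $\|\bar\wv\|_2 \geq 1$, to match eq. (\ref{eq.sa4})).

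The hard part is the final passage: the bounded-progress argument naturally controls a $\gamma_s$-weighted average (equivalently, the best) of the distances $\|\bar\wv - \wv_{s-1}\|_2$, whereas the statement concerns the last iterate $\wv_T$. Transferring the rate to $\wv_T$ is precisely the Robbins--Monro phenomenon flagged after Theorem 1: since $\sum_s d_s\gamma_s^2 < \infty$ while $\sum_s d_s\gamma_s = \infty$, the late updates are negligible and the per-step recursion $\|\bar\wv - \wv_s\|_2^2 \leq \|\bar\wv - \wv_{s-1}\|_2^2\,(1 - \gamma_s/\rho_s) + \gamma_s^2$ drives $\wv_s \to \bar\wv$, so the last iterate inherits the same order. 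A secondary obstacle is to carry this through without letting $\rho_s$ (hence the size of the domain of $\yv_t$) enter the constants, so as to preserve the scale-invariance already noted for Theorem 1; this requires replacing the crude $\rho_s$-division by the sharper interplay between $\rho_s \le \epsilon + \|\bar\wv - \wv_{s-1}\|_2$ and the unit-norm updates.
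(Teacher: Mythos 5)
Your opening reduction is sound and it is a genuinely different strategy from the paper's. The triangle inequality step (detection follows once $\delta \geq \|\bar\wv - \wv_T\|_2$), the exact identity $\|\bar\wv - \wv_T\|_2^2 = \|\bar\wv\|_2^2 + \sum_{s=1}^T d_s\gamma_s^2 - 2G$, and the per-mistake progress bound $\vv_s^T(\bar\wv - \wv_{s-1}) \geq \|\bar\wv - \wv_{s-1}\|_2^2/(2\rho_s)$ (which indeed follows from expanding $\|\yv_s - \bar\wv\|_2^2 \leq \epsilon^2$ and using $\rho_s \geq \epsilon$ on mistakes) are all correct, and the last of these is a nice strengthening of the term-wise bound $\vv_s^T(\bar\wv - \wv_{s-1}) \geq 0$ used in Lemma 3. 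The paper, by contrast, never controls $\|\bar\wv - \wv_T\|_2$ at all: it argues by contradiction, appends a fictitious update $-c\bar\vv_{T+1}$ (with $\bar\vv_{T+1}$ the unit vector from $\bar\wv$ to $\yv_{T+1}$) to the mistake sequence, reruns the two-sided Lemma 2/Lemma 3 machinery on $\bar\wv^T(\wv_T - c\bar\vv_{T+1})$, and applies the AC-Lemma with $x = c\delta$; the assumed missed detection contributes the term $c\delta$, and choosing $c = O(m_T^{\frac{1-2\tau}{4}})$ --- the largest value compatible with the AC-Lemma's condition $a = O(c^2)$, via Theorem 1 --- gives $c\delta \leq O(\|\bar\wv\|_2^2)$, hence the contradiction.

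The difficulty is that your proof stops precisely where the work is, and both of the obstacles you flag are genuine gaps rather than routine clean-up. First, your argument bounds the $\gamma_s$-weighted average (equivalently, the best) of the distances $\|\bar\wv - \wv_{s-1}\|_2$, while the theorem needs the last iterate; the appeal to a ``Robbins--Monro phenomenon'' is a heuristic, not a proof. Making it rigorous means unrolling the recursion $e_s \leq e_{s-1}(1 - \gamma_s/\rho_s) + \gamma_s^2$ with its time-varying contraction factor and controlling the tail sums $\sum_m \gamma_m^2 \prod_{j>m}(1 - \gamma_j/\rho_j)$ --- doable, but it is the entire content of the proof and it is absent. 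Second, the $\rho_s$-dependence is not removable within your framework: when $\|\bar\wv - \wv_{s-1}\|_2 \ll \epsilon$, an adversary can place $\yv_s$ on the boundary of the $\epsilon$-ball around $\bar\wv$ so that the unit update $\vv_s$ is nearly orthogonal to $\bar\wv - \wv_{s-1}$, and the worst-case per-mistake progress really is of order $\|\bar\wv - \wv_{s-1}\|_2^2/\epsilon$. The equilibrium of the recursion is then $\|\bar\wv - \wv_T\|_2^2 \approx \epsilon\,\gamma_{m_T}$, so the completed argument yields detection for $\delta \geq O\bigl(\sqrt{\epsilon}\; m_T^{-\frac{1+2\tau}{4}}\bigr)$ (up to $\|\bar\wv\|_2$-dependent factors): a faster rate in $m_T$, but with a constant growing with $\epsilon$. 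That bound neither implies nor is implied by the stated $\epsilon$-free bound (\ref{eq.sa4}) --- it is worse whenever $\epsilon \gg \|\bar\wv\|_2^4 m_T^{2\tau}$ --- and it loses exactly the scale-invariance the paper emphasizes as a feature of FADO. The paper's virtual-update-plus-AC-Lemma route sidesteps both the last-iterate issue and the $\epsilon$-dependence, which is what that less intuitive argument buys.
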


\begin{proof}
	The proof goes by contradiction, i.e. assume $\yv_{T+1}$
	satisfies eq. (\ref{eq.sa3}) where $\delta$ satisfies eq. (\ref{eq.sa4}), and assume that
	\begin{equation}
		\| \wv_T - \yv_{T+1}\|_2 < \epsilon.
		\label{eq.proof3.1}
	\end{equation}
	Now we proof that this scenario is impossible.
	Define $\bar\vv_{T+1}$ as 
	the (unknown) sub-gradient according to $\bar\wv$, or 
	\begin{equation}
		\bar\vv_{T+1}^T\left(\yv_{T+1} - \bar\wv\right) 
		=
		\left\|\yv_{T+1} - \bar\wv\right\|_2,
		\label{eq.proof3.2}
	\end{equation}
	or 
	\begin{equation}
		\bar\vv_{T+1} 
		=
		\frac{\yv_{T+1} - \bar\wv}
			{\left\|\yv_{T+1} - \bar\wv\right\|_2},
		\label{eq.proof3.3}
	\end{equation}
	if $\|\bar\wv-\yv_{T+1}\|_2>0$. 
	Note  that the case $\|\bar\wv-\yv_{T+1}\|_2=0$ is impossible since it is assumed that $\|\bar\wv-\yv_{T+1}\|_2>\epsilon$.
	Note that in any case $\|\bar\vv_{T+1}\|_2=1$.
	Now we proceed as before, however studying 
	$(\wv_T - c\bar\vv_{T+1})^T\bar\wv$ 
	where $c>0$ is set lateron.
	Since 
	\begin{equation}	
		\bar\wv^T\wv_T 
		= 
		\sum_{s=1}^T d_s \gamma_s \vv_s^T\bar\wv 
		\geq 		
		\sum_{s=1}^T d_s \gamma_s (\vv_s^T\yv_s - \epsilon),
		\label{eq.proof3.4}
	\end{equation}
	and by definition of $\bar\vv_{T+1}$, it follows that 
	\begin{equation}		
		\bar\wv^T(\wv_T + c \bar\vv_{T+1}) 
		\geq 		
		\sum_{s=1}^{T} d_s \gamma_s (\vv_s^T\yv_s - \epsilon) - c \bar\wv^T\bar\vv_{T+1}.
		\label{eq.proof3.5}
	\end{equation}
	By definition of $d_s=1$ and thus $\vv_s^T(\yv_s-\wv_{s-1})\geq \epsilon$,
	and the assumption that no mistake was made at instance $T+1$ despite 
	the fact that 
	\begin{equation}
		\bar\vv_{T+1}^T(\yv_{T+1} - \bar\wv) 
		=
		\|\bar\wv - \yv_{T+1}\|_2
		= 
		\epsilon+\delta,
		\label{eq.proof3.6}
	\end{equation}
	then one has 
	\begin{equation}		
		\bar\wv^T(\wv_T + c \bar\vv_{T+1}) \\
		\geq 
		\sum_{s=1}^{T} d_s \gamma_s (\vv_s^T\wv_{s-1}) 
		- c\bar\vv_{T+1}^T\yv_{T+1}  
		+ c\epsilon + c\delta.
		\label{eq.proof3.7}
	\end{equation}
	Since by assumption (contradiction), no mistake is made at $T+1$, one has 
	\begin{equation}		
		\bar\vv_{T+1}^T (\wv_T - \yv_{T+1}) 
		\geq 
		-\|\yv_{T+1} - \wv_T\|_2^2 
		\geq 
		-\epsilon,
		\label{eq.proof3.8}
	\end{equation}
	or	
	\begin{equation}		
		- \bar\vv_{T+1}^T\yv_{T+1} + \epsilon 
		\geq 
		-\vv_{T+1}^T\wv_T.
		\label{eq.proof3.9}
	\end{equation}
	Hence
	\begin{equation}		
		\bar\wv^T(\wv_T - \gamma_{T+1}\bar\vv_{T+1}) \\
		\geq 
		\sum_{s=1}^{T} d_s \gamma_s (\vv_s^T\wv_{s-1}) 
		- c\bar\vv_{T+1}^T\wv_T + c \delta.
		\label{eq.proof3.10}
	\end{equation}	
	Conversely, we have that 
	\begin{multline}		
		\|\wv_T -c\bar\vv_{T+1}\|_2^2
		= \|\wv_T\|_2^2 +  c^2 \|\bar\vv_{T+1}\|_2^2 
		- 2c\wv_T^T\bar\vv_{T+1} \\
		=
		\sum_{s=1}^{T} d_s\gamma_s^2 
		+ 
		2\sum_{s=1}^{T} d_s \gamma_s (\vv_s^T\wv_{s-1}) 
		+ c^2
		- 2c \wv_T^T\bar\vv_{T+1}.
		\label{eq.proof3.11}
	\end{multline}
	Combining inequalities gives 
	\begin{multline}		
		\left(\sum_{s=1}^{T} d_s \gamma_s (\vv_s^T\wv_{s-1}) 
		- c \bar\vv_{T+1}^T\wv_{T}\right)
		+ \delta c \\
		\leq 
		\|\bar\wv\|_2 
		\sqrt{
		\left(\sum_{s=1}^{T} d_s\gamma_s^2 +c^2\right)
		+ 
		2\left(\sum_{s=1}^{T} d_s \gamma_s (\vv_s^T\wv_{s-1}) 
		- c \bar\vv_{T+1}^T\wv_{T}\right)}.
		\label{eq.proof3.12}
	\end{multline}
	Then application of the AC-Lemma
	for $x=c\delta$, 
	$y=\left(\sum_{s=1}^{T} d_s \gamma_s (\vv_s^T\wv_{s-1}) - c\bar\vv_{T+1}^T\wv_{T}\right)$, 
	$c=\|\bar\wv\|_2$ and 
	$a=\left(\sum_{s=1}^{T} d_s\gamma_s^2 +c^2\right)$,
	gives 
	\begin{equation}	
		c \delta \leq 	O(\|\bar\wv\|_2^2),
		\label{eq.proof3.13}
	\end{equation}
	whenever $a=\left(\sum_{s=1}^{T} d_s\gamma_s^2 +c^2\right) = O\left(\|\bar\wv\|_2^2\right) <\infty$.
	Now we turn to the question how to choose $c$. The larger $c$ is, the better the power (the smaller $\delta$ should be).
	However, this constant cannot be too large as then the condition $a=O(c^2)$ of the AC-Lemma
	would be violated, hence the maximal value of $c$ is such that 
	\begin{equation}	
		\left(\sum_{s=1}^{T} d_s\gamma_s^2 +c^2\right) = O\left(\|\bar\wv\|_2^2\right) \leq \infty.
		\label{eq.proof3.14}
	\end{equation}
	For example, if $c$ would depend on $T$, this condition is violated.
	Since the Theorem 1 states that $m_T^{\frac{1-2\tau}{2}}=O(\|\bar\wv\|_2^2)$ for a fixed $\mu>0$, 
	one may choose $c=O(m_T^{\frac{1-2\tau}{4}})$ such that $c^2=O(\|\bar\wv\|_2^2)$.
	Since this contradicts $\delta$ as in eq. (\ref{eq.sa4}), the result follows.
\end{proof}

\subsection{Robustness and the Agnostic Case}

The previous results were obtained by relying on the realisability 
assumption of eq. (\ref{eq.rea1}) or even eq. (\ref{eq.rea2}).
Results however degrade not too fast when deviating from this assumption.
First, we present numerical results indicating how fast results degrade in practical situations 
where the realisability assumption only holds vaguely. 

\begin{figure}[htbp]
	\begin{center}
		\includegraphics[width=0.75\textwidth]{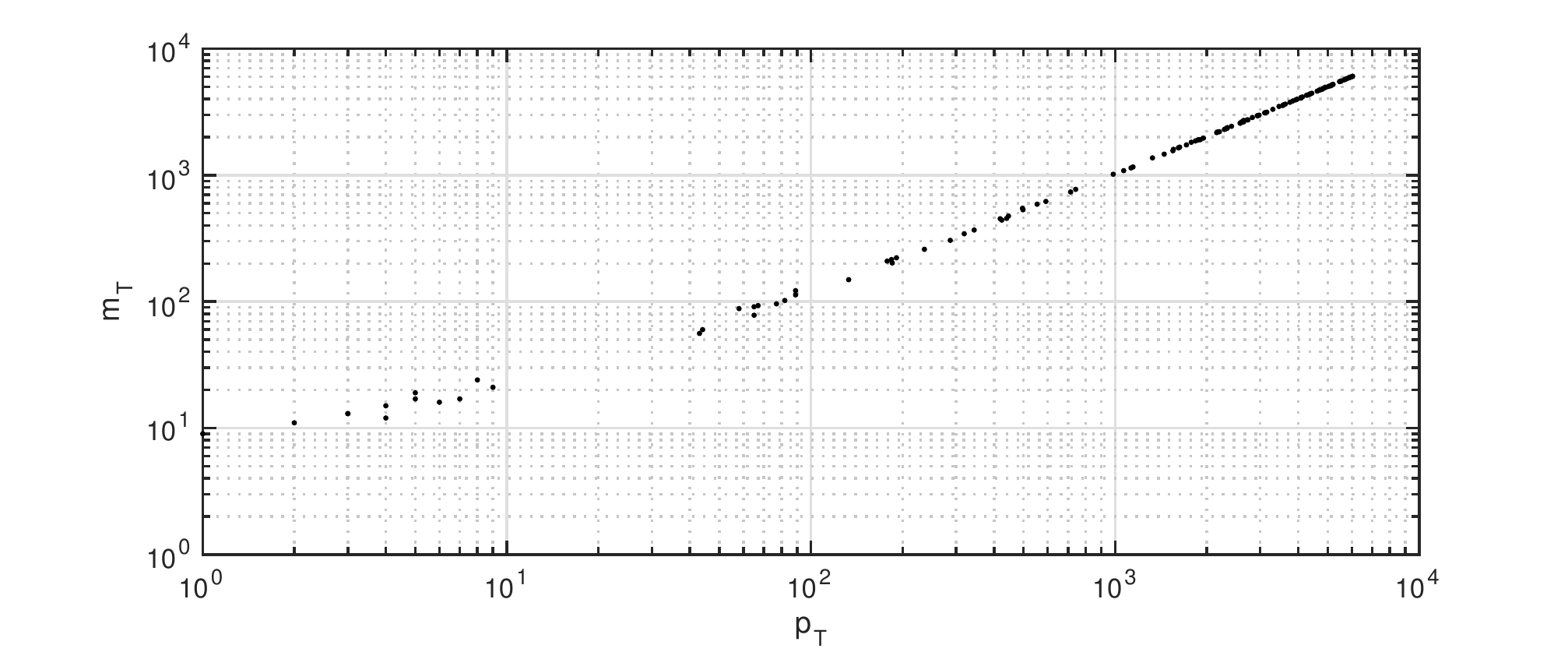} \\
		\includegraphics[width=0.75\textwidth]{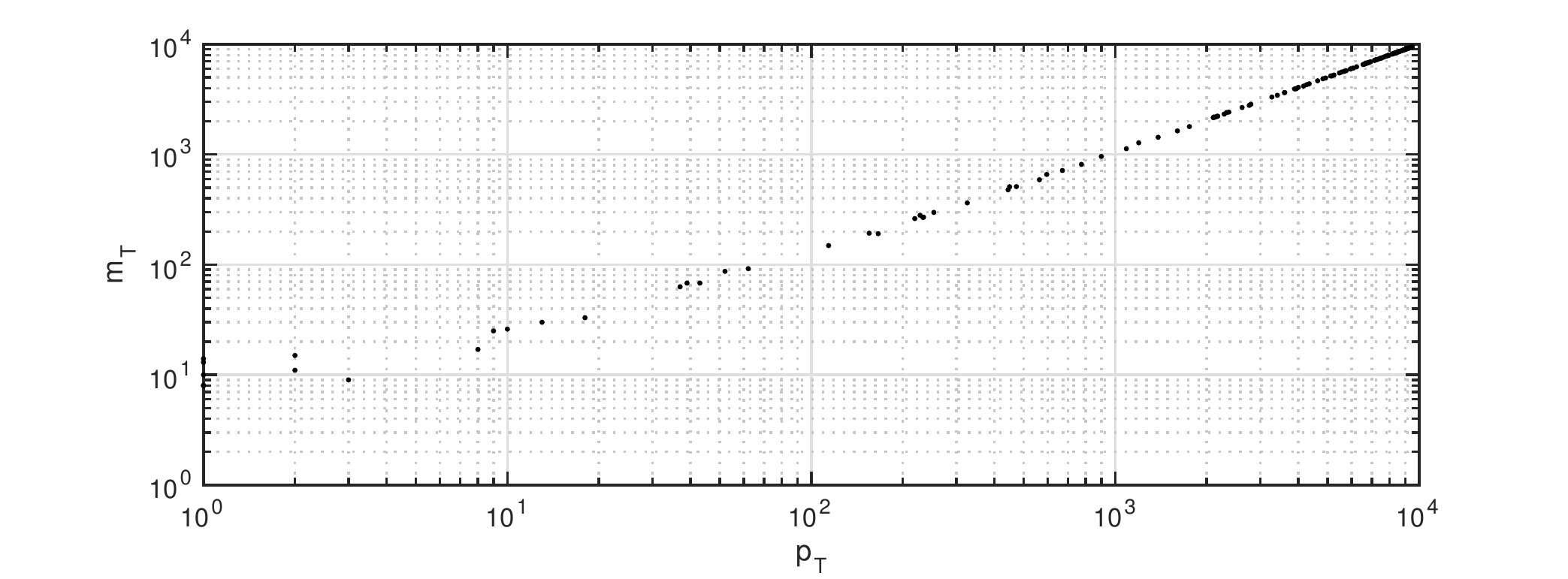}
	\end{center}
	\caption{\em 
		Number of false negatives $m_T$ in case of $p_T$ 
		faulty samples/fraudulent transactions.
		Subplot (a) for $T=10000$ and $n=2$,
		Subplot (b) for $T=10000$ and $n=10$.		
	}
	\label{dev}
\end{figure}

Figure (\ref{dev}.a) and (\ref{dev}.b) present results of a numerical study 
based on $T=10000$ samples $\{\yv_t\}_{t=1}^T$
where the number $m_T$ is plotted in function of the 
number of cases violating the realisability assumption of eq. (\ref{eq.rea2}). 
This number $p_T$ is defined as 
\begin{equation}
	p_T = \left|\{1\leq t \leq T: \|\yv_t- \bar\wv\|_2\geq \epsilon\}\right|,
	\label{eq.p}
\end{equation}
where $|\cdot|$ denotes the number of elements in a set.
Figure (\ref{dev}.a) presents this graph for the case where 
$\{\yv_t\}_t\subset\R^n, \wv=(2,2, 0, \dots, 0)^T\in\R^n$ and 
$n=2$, with a design as given used in (\ref{fault2db}).
Figure (\ref{dev}.b) presents results in case of $n=10$ where now $(2,2,0, \dots, 0)^T\in\R^{10}$.
Results indicate that performance 
(characterised as $m_T$, the number of detections in the set $\{\yv_t\}$)
increases directly proportional to the number of cases violating the realisability assumption.
This means that useful results are obtained even for situations 
where most (!) of the cases in $\{\yv_t\}_t$ are faulty themselves. 

From this plot, we see that $m_T$ is directly proportional to $p_T$ in practical scenarios.
The theoretical bound based on the above arguments gives a slightly worse dependence on $p_T$
as follows. This bound is in terms of the total {\em size} of the `faulty items' in $P_T$,
not in terms of their number.

Now we operate under the following assumption, namely that for given $\epsilon,\mu>0$
there exists a vector $\bar\wv\in\R^n$ where for a set $P_T\in\{1, \dots, T\}$ of size $p_t=|P_T|$
\begin{equation}
	 \|\yv_t- \bar\wv\|_2\leq \epsilon-\mu, \forall t\not\in P_T.
	\label{eq.nonrea}
\end{equation}
That is, the decision rule works for a fraction of 
\begin{equation}
	r_{T}=\left(1-\frac{p_T}{T}\right).
	\label{eq.fraction}
\end{equation}
Moreover, let the {\em size} of the faults occurring before item $t$ be denoted as $\sigma_t$
defined as 
\begin{equation}
	\sigma_t = \sum_{s=1}^t \left(\|\yv_s-\bar\wv\|_2 - (\epsilon-\mu)\right)_+^2,
	\label{eq.size}
\end{equation}
with $(\cdot)_+$ denoting the positive part of its argument.
Note the `square' in the definition.
Then
\begin{Theorem}
	Fix $\epsilon,\mu>0$ and $p_T$ such that there exists a $\bar\wv$ and $P_T$  
	of size $p_T$ for which eq. (\ref{eq.nonrea}) holds.
	Assume bounded transactions where $\max_t\|\yv_t\|_2<\infty$.
	Let $\tau>0$ be small, and let $\{\gamma_t>0\}_t$ be choosen as 
	 \begin{equation}
		\gamma_t = \frac{1}{m_t^{1/2+\tau}}, \ \ \forall t>0,
		\label{eq.a.gam}
	\end{equation}
	and assume that 
	 \begin{equation}
		\sigma_T = O\left(\|\bar\wv\|_2^8\right) < \infty,
		\label{eq.a.rho}
	\end{equation}
	then FADO($\epsilon$) can make at most 
	\begin{equation}
		m_t^{1-2\tau} =  O\left(\frac{ \|\bar\wv\|_2^4}{\mu^2} \right) \ \forall t>0,
		\label{eq.a.bound}
	\end{equation}
	mistakes.
\end{Theorem}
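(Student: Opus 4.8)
The plan is to rerun the proof of Theorem 1 essentially verbatim, charging the faulty transactions to a single additive error term that the hypothesis on $\sigma_T$ keeps bounded. For every $s$ set the \emph{excess} $\rho_s=\left(\|\yv_s-\bar\wv\|_2-(\epsilon-\mu)\right)_+\ge 0$, so that $\|\yv_s-\bar\wv\|_2\le\epsilon-\mu+\rho_s$ holds for \emph{all} $s$, with $\rho_s=0$ exactly on the reliable indices $s\notin P_T$, and $\sigma_T=\sum_{s=1}^T\rho_s^2$ by eq. (\ref{eq.size}). The boundedness assumption $\max_t\|\yv_t\|_2<\infty$ guarantees that each $\rho_s$, and hence $\sigma_T$, is finite.

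First I would reproduce the lower bound on $\bar\wv^T\wv_t$ from the proofs of Lemma 2 and Theorem 1, but replace the realisable estimate $\|\bar\wv-\yv_t\|_2\le\epsilon-\mu$ by $\|\bar\wv-\yv_t\|_2\le\epsilon-\mu+\rho_t$. Cauchy--Schwarz gives $d_t(\vv_t^T\bar\wv-\vv_t^T\yv_t)\ge-d_t^2(\epsilon-\mu+\rho_t)$, and since a mistake at $s$ forces $\vv_s^T\yv_s\ge\vv_s^T\wv_{s-1}+\epsilon$, summing over $s\le t$ yields the perturbed progress bound
\[
\bar\wv^T\wv_t\ \ge\ \sum_{s=1}^t d_s\gamma_s\,\vv_s^T\wv_{s-1}\ +\ \mu\sum_{s=1}^t d_s\gamma_s\ -\ \sum_{s=1}^t d_s\gamma_s\rho_s .
\]
The only genuinely new object is the error $F_t=\sum_{s=1}^t d_s\gamma_s\rho_s$, which I would control by Cauchy--Schwarz across the update times,
\[
F_t\ \le\ \Big(\sum_{s=1}^t d_s\gamma_s^2\Big)^{1/2}\Big(\sum_{s=1}^t d_s\rho_s^2\Big)^{1/2}\ \le\ \sqrt{\zeta(1+2\tau)}\,\sqrt{\sigma_T},
\]
invoking Lemma 3 for the first factor and $\sum_s d_s\rho_s^2\le\sigma_T$ for the second. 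This is exactly where the square in the definition of $\sigma_t$ is needed: it is the factor that pairs with the summable series $\sum_s d_s\gamma_s^2$.

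Next I would combine the progress bound with $\bar\wv^T\wv_t\le\|\bar\wv\|_2\|\wv_t\|_2$ and the expansion $\|\wv_t\|_2^2=\sum_s d_s^2\gamma_s^2+2\sum_s d_s\gamma_s\,\vv_s^T\wv_{s-1}$ used in Lemma 2, reaching an inequality of the shape $x+y\le c\sqrt{a+2y}+F_t$ with $x=\mu\sum_s d_s\gamma_s\ge0$, $y=\sum_s d_s\gamma_s\,\vv_s^T\wv_{s-1}$, $a=\sum_s d_s^2\gamma_s^2$ and $c=\|\bar\wv\|_2$; the admissibility condition $y\ge-a/2$ comes for free from Lemma 1. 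Writing $u=\sqrt{a+2y}\ge0$ (so $y=(u^2-a)/2$) turns this into $x\le-\tfrac12u^2+cu+F_t+\tfrac{a}{2}$, a downward parabola in $u$ maximised at $u=c$; this is the AC-Lemma estimate carried through with the extra additive $F_t$, and gives
\[
\mu\sum_{s=1}^t d_s\gamma_s\ \le\ \tfrac12\|\bar\wv\|_2^2+F_t+\tfrac12\sum_{s=1}^t d_s^2\gamma_s^2 .
\]
Finally, substituting Lemma 3's lower bound $\sum_s d_s\gamma_s\ge c\,m_t^{1/2-\tau}$ and solving for $m_t$ delivers $m_t^{1-2\tau}=O(\|\bar\wv\|_2^4/\mu^2)$.

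The hard part is the bookkeeping around $F_t$. For the mistake bound to degrade only to the order already obtained in the realisable Theorem 1, the error $F_t=O(\sqrt{\sigma_T})$ must stay subordinate to the $\|\bar\wv\|_2^2$ scale, so that it neither dominates the $\tfrac12\|\bar\wv\|_2^2$ term above nor pushes the perturbed radicand out of the $a=O(c^2)$ regime on which the AC-Lemma's $x=O(c^2)$ conclusion rests. This is precisely the role of the scaling hypothesis on $\sigma_T$ in eq. (\ref{eq.a.rho}): pinning down the exponent of $\|\bar\wv\|_2$ it must carry, and confirming that the finite $\sigma_T$ together with the summability $\sum_s d_s\gamma_s^2<\infty$ (Lemma 3) keeps $F_t$ at this scale — rather than the crude, non-summable alternative of charging each fault separately — is the only delicate point, everything else being a rerun of the realisable argument.
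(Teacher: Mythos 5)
Your proposal is correct and follows essentially the same route as the paper's own proof: your excess $\rho_s$ is exactly the paper's $\delta_s$, your perturbed progress bound is the paper's lower bound on $\bar\wv^T\wv_t$, and the Cauchy--Schwarz pairing of $F_t=\sum_s d_s\gamma_s\rho_s$ against the summable series $\sum_s d_s\gamma_s^2\leq\zeta(1+2\tau)$ (Lemma 3) is precisely the paper's step, as is the AC-Lemma finish. Two remarks. First, your substitution $u=\sqrt{a+2y}$ with the downward-parabola maximisation is a clean inline sharpening of the AC-Lemma, giving the explicit constant $x\leq\frac{1}{2}c^2+F_t+\frac{a}{2}$ instead of an $O(c^2)$ statement; note only that the admissibility $a+2y\geq 0$ comes for free from Lemma 2 (since $a+2y=\|\wv_t\|_2^2\geq 0$), not from Lemma 1, where it appears as a hypothesis. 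Second, the caveat in your closing paragraph is well founded and in fact exposes an inconsistency in the theorem as stated: under the literal hypothesis $\sigma_T=O\left(\|\bar\wv\|_2^8\right)$ of eq. (\ref{eq.a.rho}), the error term $F_t=O(\sqrt{\sigma_T})=O\left(\|\bar\wv\|_2^4\right)$ dominates the $\frac{1}{2}\|\bar\wv\|_2^2$ term, and both your derivation and the paper's then yield only $m_t^{1-2\tau}=O\left(\|\bar\wv\|_2^8/\mu^2\right)$; obtaining the claimed $O\left(\|\bar\wv\|_2^4/\mu^2\right)$ requires $\sqrt{\sigma_T}=O\left(\|\bar\wv\|_2^2\right)$, i.e. $\sigma_T=O\left(\|\bar\wv\|_2^4\right)$. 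Your write-up reproduces the paper's reasoning faithfully, and, unlike the paper, flags this exponent bookkeeping as the delicate point rather than glossing over it.
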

\begin{proof}
	The key element is in the first step of the proof, namely in deriving the lower-bound.
	First, for cases $t$ not in $P_T$, one has
	\begin{equation}
		\vv_t^T(\bar\wv - \yv_t) \geq - \|\yv_t - \bar\wv\|_2 \geq -\epsilon + \mu.
		\label{eq.proof5.1}
	\end{equation}
	For cases $t$ within $P_T$ (`faulty cases') 
	\begin{equation}
		\vv_t^T(\bar\wv - \yv_t) \geq - \|\yv_t - \bar\wv\|_2 \geq -\epsilon + \mu - \delta_t
		\label{eq.proof5.2}
	\end{equation}
	where $\delta_t\geq 0$ denotes the `size' of this mistake, or $\vv_t^T\bar\wv \geq \vv_t^T\yv_t - \epsilon + \mu - d_t\delta_t$.
	Considering those cases $t$ were a mistake was made ($d_t=1$), one has 
	\begin{equation}
		\vv_t^T(\yv_t - \wv_{t-1}) = \|\yv_t - \wv_{t-1}\|_2 \geq \epsilon,
		\label{eq.proof5.3}
	\end{equation}
	or $\vv_t^T\yv_t - \epsilon  \geq \vv_t^T\wv_{t-1}$.
	Hence
	\begin{equation}
		\bar\wv^T\wv_{t} 
		= 
		\sum_{s=1}^t d_s \gamma_s \vv_s^T\bar\wv \\
		\geq 
		\sum_{s=1}^t d_s \gamma_s \vv_s^T\wv_{s-1}
		+
		\sum_{s=1}^t d_s \gamma_s \mu
		- \sum_{s=1}^t d_s \gamma_s \delta_s.
		\label{eq.proof5.4}
	\end{equation}
	where we let $\delta_t=0$ in case instance $t$ is not `faulty' (not in $P_T$).
	Thus
	\begin{equation}
		\bar\wv^T\wv_{t} 
		\geq 
		\sum_{s=1}^t d_s \gamma_s \vv_s^T\wv_{s-1}
		+
		\sum_{s=1}^t d_s \gamma_s \mu
		- \sqrt{\zeta(1+2\tau)\sigma_t}.
		\label{eq.proof5.5}
	\end{equation}
	where we use Cauchy-Schwarz' inequality such that 
	\begin{equation}
		 \sum_{s=1}^t d_s \gamma_s \delta_s 
		 \leq \sqrt{ \sum_{s=1}^t d_s^2 \gamma_s^2} \sqrt{ \sum_{s=1}^t \delta_s^2}
		 \leq \sqrt{\zeta(1+2\tau) \sigma_t}.
		\label{eq.proof5.6}
	\end{equation} 
 	Since the upper-bound to $\bar\wv^T\wv_{t}$ is the same as 
	before, we have 
	\begin{equation}
		\sum_{s=1}^t d_s \gamma_s  \vv_s^T\wv_{s-1}
		+ \sum_{s=1}^t d_s \gamma_s \mu 
		- \sqrt{\zeta(1+2\tau) \sigma_t}
		\\
		\leq 
		\|\bar\wv\|_2 
		\sqrt{\sum_{s=1}^t d_s^2\gamma_s^2 + 
			2\sum_{s=1}^t d_s\gamma_s (\vv_s^T\wv_{s-1})}.
		\label{eq.proof5.7}
	\end{equation}	
	Application of the AC-Lemma assuming that $\sqrt{\sigma_t}=O(\|\bar\wv\|_2^4)<\infty$, gives then the result.
\end{proof}

Remark that the choice of $\epsilon$ influences the bound only indirectly.
If $\epsilon>0$ was set unrealistically small, many cases in $\{\yv_t\}$ would violate 
eq. (\ref{eq.rea1}) and hence $p_T$ would be large.
Conversely, if $\epsilon$ would be set too large, learning would be very easy ($m_T$ is very small),
but the resulting detection capability would be low.
So it is of interest to the user to choose $\epsilon$ carefully.

The gap between the 
practically observed number of mistakes $m_T$ in this scenario (in terms of $p_T$)
and the theoretical result (in term  of $\sigma_T$),  
prompts the need for further work.
Note that this is much similar to the state of affairs in studying the mistake bound 
in the good ol' perceptron, see e.g. \cite{freund99}. 

Since a sample can contribute at most a factor of size $\gamma_t^2$ to $\wv_T$,
the influence of a single point is bounded in the solution.
Since this influence is independent of the absolute size of the  sample,
the estimator is robust to outliers as in \cite{rousseeuw2005robust}.
This is a critical property in the context of fault detection where 
outlying points are present by definition.
As this property is violated in case of unknown $\epsilon$ (see below)
the FADO estimator detailed in Alg. (\ref{alg.fado2}) is less robust, 
also observed in the subsequent experiments.

\subsection{Realisable case, unknown $\epsilon>0$}

Now we consider the case where  $\wv$ as well as $\epsilon>0$ are unknown.
The algorithm builds up a sequence of $\{(\wv_t,\epsilon_t)\}_t$ 
which can be thought of as approximating this $(\bar\wv,\epsilon)$.
However, $\epsilon_t$ is represented by its inverse such 
that the algorithm decides that $\yv_t$ is an atypical instance as 
\begin{equation}
	\frac{1}{\epsilon_{t-1}} \| \yv_t - \wv_{t-1}\| > 1. 
	\label{eq.rule}
\end{equation}
The key insight is to take 
\begin{equation} 
	\epsilon_t = \frac{1}{\gamma_t}, \ \forall t, 
	\label{eq.epst}
\end{equation}
and hence to let the radius $\epsilon_t$ of the detection mechanism grow slowly 
so that there is enough time to adjust $\wv_t$ if needed.
Hence, the decision variable is now
\begin{equation}
	d_t=
		\begin{cases}
			1 & \mbox{ \ If \ } \gamma_{t-1}\| \yv_t - \wv_{t-1}\|_2\geq 1\\
			0 & \mbox{ \ Otherwise.}	
		\end{cases}
		\label{eq.d3}
\end{equation}
The full algorithm is detailed in Alg. (\ref{alg.fado2}).

\begin{algorithm}
\caption{FADO}
\label{alg.fado2}
\begin{algorithmic}
\STATE Initialise $\wv_0=0_d$.
\FOR {$t=1,2,\dots$}
\STATE(1) Receive transaction $\yv_t\in\R^n$
\STATE(2) Raise an alarm (set $d_t=1$) if 
	\[ \gamma_{t-1} \left\| \yv_t - \wv_{t-1}\right\|_2 \geq 1, \]
	and set 
	\[ \vv_t = \frac{\yv_t - \wv_{t-1}}{\|\yv_t-\wv_{t-1}\|_2}\in\R^n.\]
\STATE(3) if so, update
	\[ \wv_t = \wv_{t-1} + \gamma_{t-1} \vv_t. \]
	If no alarm is raised ($d_t=0$), 
	set $\wv_t=\wv_{t-1}$.
\ENDFOR
\end{algorithmic}
\end{algorithm}

Now we proof that this method makes only a bounded number of mistakes, while 
having increasing detection capabilities.
Again we assume the existence of a perfect solution $(\bar\wv,\epsilon)$ with $\epsilon>0$
such that the following holds for all transactions,
\begin{equation}
	\|\bar\wv - \yv_t\|_2 <  \epsilon, \forall t.
	\label{eq.rea3}
\end{equation}
or that there exists a $\mu>0$ such that 
\begin{equation}
	\|\bar\wv - \yv_t\|_2 \leq \epsilon -\mu, \forall t.
	\label{eq.rea4}
\end{equation}
This {\em realisability} assumption is not restrictive as $\epsilon\geq 0$ 
can be arbitrary large.
However, if $\epsilon$ is too large, the rule $\|\bar\wv - \yv_t\|_2>\epsilon$ has no power anymore,
i.e. it will not detect interesting cases.
The smaller this $\epsilon$ is, the more powerful the rule is.

\begin{Theorem}
	Assume the existence of a couple $(\bar\wv,\epsilon)$ such that eq. (\ref{eq.rea1})
	holds for all transactions $\{\yv_t\}_t$, and assume that there 
	is a finite constant $\leq R<\infty$ such that $\max_t\|\yv_t\|_2 \leq R$.
	Let $\{\gamma_t>0\}_t$ as 
	 \begin{equation}
		\gamma_t = \frac{\gamma_0}{(m_t+1)^{1/2+\tau}}, \ \ \forall t\geq 0,
		\label{eq.gam2}
	\end{equation}
	and $\gamma_0>0$, then
	\begin{equation}
		m_t \leq O\left(\|\bar\wv\|_2^2\right).
		\label{eq.thm2}
	\end{equation}
\end{Theorem}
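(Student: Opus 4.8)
The plan is to reuse the perceptron-style potential argument of Lemmas 1--2 and Theorem 1, now applied to Alg.~(\ref{alg.fado2}). Two features change the bookkeeping: both the decision and the update use the \emph{same} step $\gamma_{t-1}$, and, since the radius is represented as $\epsilon_t=1/\gamma_t$, a mistake at time $s$ means $\|\yv_s-\wv_{s-1}\|_2\geq 1/\gamma_{s-1}$ rather than $\geq\epsilon$. As before I would track the two scalar sequences $\{\bar\wv^T\wv_t\}_t$ (sandwiched between a lower and an upper bound) and $\{\|\wv_t\|_2^2\}_t$, abbreviating $A_t=\sum_{s=1}^t d_s\gamma_{s-1}^2$, $B_t=\sum_{s=1}^t d_s\gamma_{s-1}$ and $Y_t=\sum_{s=1}^t d_s\gamma_{s-1}\vv_s^T\wv_{s-1}$.

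First I would telescope the update $\wv_t=\wv_{t-1}+d_t\gamma_{t-1}\vv_t$ (with $\|\vv_t\|_2=1$ and $d_t^2=d_t$) to obtain $0\leq\|\wv_t\|_2^2=A_t+2Y_t$, so that Cauchy--Schwarz gives the upper bound $\bar\wv^T\wv_t\leq\|\bar\wv\|_2\sqrt{A_t+2Y_t}$ and, simultaneously, the admissibility condition $Y_t\geq-A_t/2$ required by the AC-Lemma. For the lower bound I would combine realisability, $\vv_s^T\bar\wv\geq\vv_s^T\yv_s-\|\yv_s-\bar\wv\|_2\geq\vv_s^T\yv_s-\epsilon$, with the mistake condition $\vv_s^T(\yv_s-\wv_{s-1})=\|\yv_s-\wv_{s-1}\|_2\geq 1/\gamma_{s-1}$, which yields the telescoped estimate
\[
 \bar\wv^T\wv_t=\sum_{s=1}^t d_s\gamma_{s-1}\vv_s^T\bar\wv\geq Y_t+\sum_{s=1}^t d_s\gamma_{s-1}\Big(\frac{1}{\gamma_{s-1}}-\epsilon\Big)=Y_t+m_t-\epsilon B_t.
\]
The essential new phenomenon is visible here: the progress collected per mistake is the constant $1$, so the progress term is exactly the quantity $m_t=\sum_s d_s$ we wish to bound, rather than the margin-scaled term $\mu\sum_s d_s\gamma_s$ of Theorem 1.

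Chaining the two bounds gives $(m_t-\epsilon B_t)+Y_t\leq\|\bar\wv\|_2\sqrt{A_t+2Y_t}$, and I would apply the AC-Lemma with $x=m_t-\epsilon B_t$, $y=Y_t$, $a=A_t$ and $c=\|\bar\wv\|_2$. Here the choice $\gamma_t=\gamma_0/(m_t+1)^{1/2+\tau}$ plays the same role as in Lemma 3: indexing the mistaken steps by their mistake count $k=1,\dots,m_t$ turns $A_t$ into $\gamma_0^2\sum_{k=1}^{m_t}k^{-(1+2\tau)}\leq\gamma_0^2\zeta(1+2\tau)<\infty$, a finite constant, so the hypothesis $a=O(c^2)$ of the AC-Lemma holds and it returns $x=m_t-\epsilon B_t=O(\|\bar\wv\|_2^2)$.

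The main obstacle is that this last bound is \emph{self-referential}: it reads $m_t\leq\epsilon B_t+O(\|\bar\wv\|_2^2)$, and $B_t$ itself grows with $m_t$. The resolution --- and the reason $0<\tau<1/2$ matters --- is that the same reindexing gives $B_t=\gamma_0\sum_{k=1}^{m_t}k^{-(1/2+\tau)}=O(m_t^{1/2-\tau})$ with exponent strictly below $1$, so the correction $\epsilon B_t$ is \emph{sublinear} in $m_t$. A standard self-bounding dichotomy then closes the argument: either $\epsilon B_t$ dominates $m_t$, forcing $m_t=O(1)$ because the exponent $1/2-\tau$ is sub-unit, or the $O(\|\bar\wv\|_2^2)$ term dominates and $m_t=O(\|\bar\wv\|_2^2)$; the degenerate case $x<0$ falls under the first alternative. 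This is where the boundedness hypothesis $\max_t\|\yv_t\|_2\leq R$ is spent: it controls the admissible radius (one may take $\epsilon\leq R+\|\bar\wv\|_2$) and thereby keeps the constant emitted by the dichotomy inside the suppressed $O(\|\bar\wv\|_2^2)$, giving the claimed $m_t\leq O(\|\bar\wv\|_2^2)$ for every $t$.
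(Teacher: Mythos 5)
Your proposal is correct and follows essentially the same route as the paper's own proof: the same telescoped potential identities, the same chained inequality $(m_t-\epsilon B_t)+Y_t\leq\|\bar\wv\|_2\sqrt{A_t+2Y_t}$, and the same application of the AC-Lemma with $a=A_t\leq\gamma_0^2\zeta(1+2\tau)$ yielding $m_t-\epsilon B_t=O(\|\bar\wv\|_2^2)$. The only difference is cosmetic and lies in the final, self-referential step: the paper bounds $\epsilon B_t\leq\frac{1}{2}m_t$ by splitting the sum at an optimised mistake count $m'$, whereas you reindex by mistake count to get $B_t=O(m_t^{1/2-\tau})$ directly and close with a self-bounding dichotomy --- the same sublinearity idea, if anything stated more cleanly.
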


\begin{proof}
	Again, the recursion implemented by the algorithm can be stated concisely as 
	\begin{equation}
		\wv_t = \sum_{s=1}^t d_s \gamma_{s-1} \vv_s,
		\label{eq.proof4.1}
	\end{equation}	
	where $d_s=1$ in case $\gamma_{s-1}\|\yv_s -\wv_{s-1}\|_2>1$, and $d_s=0$ otherwise.
	Hence, since eq. (\ref{eq.rea3}), and 
	\begin{equation}
		 \vv_s^T(\bar\wv - \yv_s) \geq -\epsilon,
		\label{eq.proof4.2}
	\end{equation}	
	or $ \vv_s^T\bar\wv \geq \vv_s^T\yv_s -\epsilon$, then one has 
	\begin{equation}
		\bar\wv^T\wv_t 
		= \sum_{s=1}^t d_s \gamma_{s-1} \vv_s^T\bar\wv
		\geq \sum_{s=1}^t d_s \gamma_{s-1} (\yv_s^T\vv_s  - \epsilon).
		\label{eq.proof4.3}
	\end{equation}	
	Then we have in case $d_t=1$ where 
	$\gamma_{t-1}\| \yv_t - \wv_{t-1}\|_2\geq 1$, that
	\begin{equation}
		\gamma_{t-1}\vv_t^T(\yv_t - \wv_{t-1}) 
		=
		\gamma_{t-1}\| \yv_t - \wv_{t-1}\|_2 
		\geq 
		1,
		\label{eq.proof4.4}
	\end{equation}	
	and 
	\begin{equation}
		\sum_{s=1}^t d_s  (\gamma_{s-1}\vv_s^T\yv_s  -  \gamma_{s-1}\epsilon)
		\geq 
		\sum_{s=1}^t  d_s \gamma_{s-1} (\vv_s^T\wv_{s-1} -\epsilon) + m_t.
		\label{eq.proof4.5}
	\end{equation}	
	Conversely, since from Cauchy-Schwarz' inequality follows that 
	$\bar\wv^T\wv_t\leq \|\bar\wv\|_2 \|\wv_t\|_2$.
	In instances $t>0$ where $d_t=1$, one has 
	\begin{equation}
		\wv_t^T\wv_t 
		= \|\wv_{t-1}\|_2^2 + \gamma_{t-1}^2\|\vv_t\|_2^2 
		+ 2\gamma_{t-1}\wv_{t-1}^T\vv_t \\
		= 2 \sum_{s=1}^t d_s \gamma_{s-1}\wv_{s-1}^T\vv_s
		+  \sum_{s=1}^t d_s \gamma_{s-1}^2,
		\label{eq.proof4.6}
	\end{equation}	
	as in case $d_s=0$, one has $\|\wv_{s}\|_2^2=\|\wv_{s-1}\|_2^2$.
	Combining inequalities and using that eq. (\ref{eq.proof4.6}) is always larger than 0, gives 
	\begin{multline}
		\left(m_t - \sum_{s=1}^t  d_s \gamma_{s-1} \epsilon\right)
		+ \left(\sum_{s=1}^t  d_s \gamma_{s-1} \wv_{s-1}^T\vv_s\right)  \\
		\leq 
		\|\bar\wv\|_2 \sqrt{ 2 \left(\sum_{s=1}^t d_s \gamma_{s-1}\wv_{s-1}^T\vv_s\right)
		+  \left(\sum_{s=1}^t d_s \gamma_{s-1}^2\right) }.
		\label{eq.proof4.7}
	\end{multline}	
	Then using the AC-Lemma gives 
	\begin{equation}
		\left(m_t - \sum_{s=1}^t  d_s \gamma_{s-1} \epsilon\right) \\
				\leq 
		O\left(\|\bar\wv\|_2^2\right),
		\label{eq.proof4.7b}
	\end{equation}	
	in case $t$ is large enough so that $m_t > \sum_{s=1}^t  d_s \gamma_{s-1} \epsilon$.
	Now we look at the term 
	\begin{equation}
		\left(m_t - \sum_{s=1}^t  d_s \gamma_{s-1} \epsilon\right)
		=
		\sum_{s=1}^t  d_s \left(1 -  \gamma_{s-1} \epsilon\right).
		\label{eq.proof4.8}
	\end{equation}	
	In case $\gamma_s\epsilon>0$, this results in negative contributions.
	However, once $\gamma_s>0$ is small enough, the summands are all positive and 
	the previously derived upper-bound becomes active.	
	Formally, this is translated into the following lower-bound 	
	\begin{multline}
		\left(m_t - \sum_{s=1}^t  d_s \gamma_{s-1} \epsilon\right)
		= 
		\left(m_t -  \sum_{s=1}^{s'}  d_s \gamma_{s-1} \epsilon
		- \sum_{s=s'+1}^{t}  d_s \gamma_{s-1} \epsilon\right) \\
		= 
		\left(\sum_{s=1}^{s'}  d_s (1-\gamma_{s-1} \epsilon)
		+ \sum_{s=s'+1}^{t}  d_s (1-\gamma_{s-1} \epsilon)\right) \\
		\geq 
		\left( (1-\gamma_0\epsilon) \sum_{s=1}^{s'}  d_s 
		+(1-\gamma_{s'} \epsilon) \sum_{s=s'+1}^{t}  d_s \right),
		\label{eq.proof4.10}
	\end{multline}	
	for any $1\leq s'<t$.
	Then changing indexing from $s'$ to $1\leq m'\leq m_t$ gives 
	\begin{multline}
		\left( (1-\gamma_0\epsilon) \sum_{s=1}^{s'}  d_s 
		+(1-\gamma_{s'} \epsilon) \sum_{s=s'+1}^{t}  d_s \right) \\
		\geq 
		(1-\gamma_0\epsilon) m' 
		+ \left(1-\frac{\gamma_0\epsilon}{{m'}^{\frac{1+2\tau}{2}}}\right) (m_t-m') \\
		=
		m_t - \gamma_0\epsilon m' +
		 \left(\frac{-\gamma_0\epsilon}{{m'}^{\frac{1+2\tau}{2}}}\right) (m_t-m') \\
		= m_t - \gamma_0\epsilon  
			\left(m' + \frac{(m_t-m')}{{m'}^{\frac{1+2\tau}{2}}}\right). 
		\label{eq.proof4.11}
	\end{multline}	
	Since this holds for any $0<m'\leq m_t$, one can choose (solve) $m'$ such that
	that
	\begin{equation}
		\left(m' + \frac{m_t-m'}{{m'}^{1/2+\tau}}\right) = \frac{m_t}{2\epsilon\gamma_0},
		\label{eq.proof4.12}
	\end{equation}	
	such that 
	\begin{equation}
		\left(m_t - \sum_{s=1}^t  d_s \gamma_{s-1} \epsilon\right)  
		\geq
		\frac{1}{2} m_t,
		\label{eq.proof4.12}
	\end{equation}	
	yielding the result.
\end{proof}

\section{Numerical experiments}

This section presents result substantiating practical efficacy of 
the FADO algorithm under different scenarios:
this section investigates whether the previously derived theoretical results 
agree with the practical results in a controlled, artificially constructed case.

The experiment has the following setup.
Again, $T=10000$ `normal' samples are generated 
centered around $\bar\wv=c(1, \dots, 1)^T\in\R^n$ for a constant $c>0$.
All those lie uniformly spread within radius $1-\mu$ of this centre.
Then the abnormal samples are drawn from a much wider distribution, 
excluding the one that lie within a radius 1 of the centre.
Then the FADO($\epsilon=1$) algorithm was run on this dataset, and the 
number of false alarms $m$ was recorded.

\begin{figure}[htbp]
	\begin{center}\includegraphics[width=0.75\textwidth]{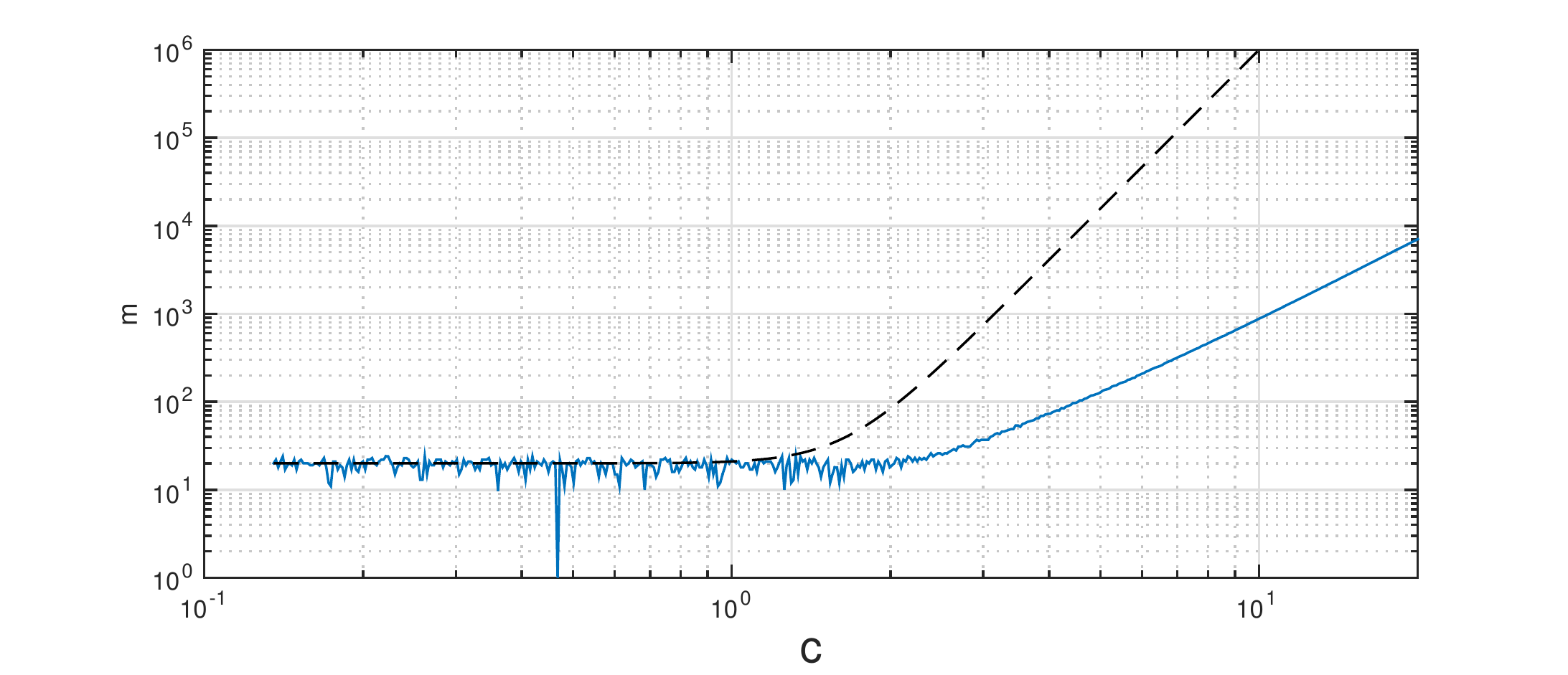}\end{center}
	\caption{\em 
		Number of alarms $m_T$ (Y-ax) in terms of different values of $c$ (X-ax)
		where $\bar\wv=c 1_n$.		
		Blue solid line: empirical numbers, black dashed line: theoretical bound of $O(c^4)$
		(modulo constants).}
	\label{n1}
\end{figure}

\begin{figure}[htbp]
	\begin{center}\includegraphics[width=0.75\textwidth]{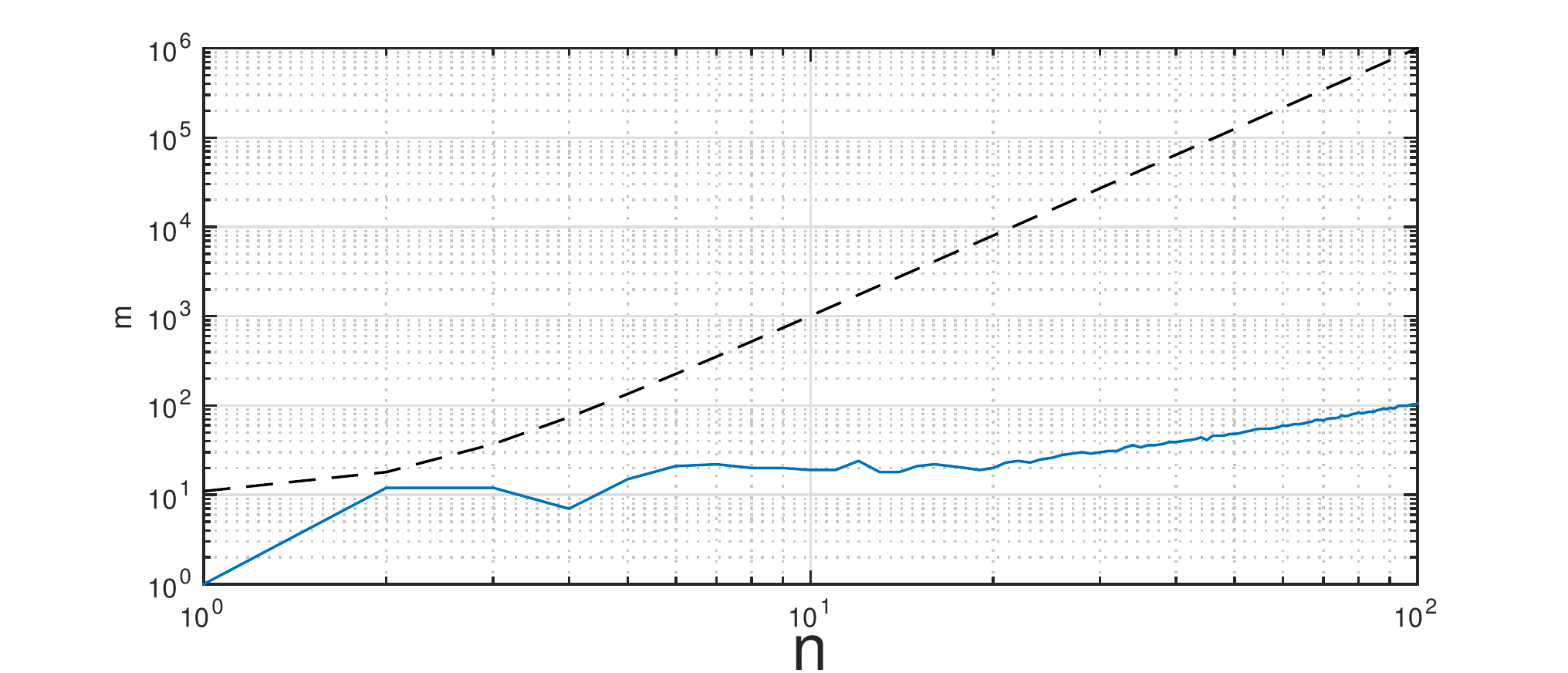}\end{center}
	\caption{\em 
		Number of alarms $m_T$ (Y-ax) in terms of different values of the dimension 
		$n$ (X-ax).		
		Blue solid line: empirical numbers, black dashed line: theoretical bound of $O(n^4)$
		(modulo constants).}
	\label{n2}
\end{figure}

\begin{figure}[htbp]
	\begin{center}\includegraphics[width=0.75\textwidth]{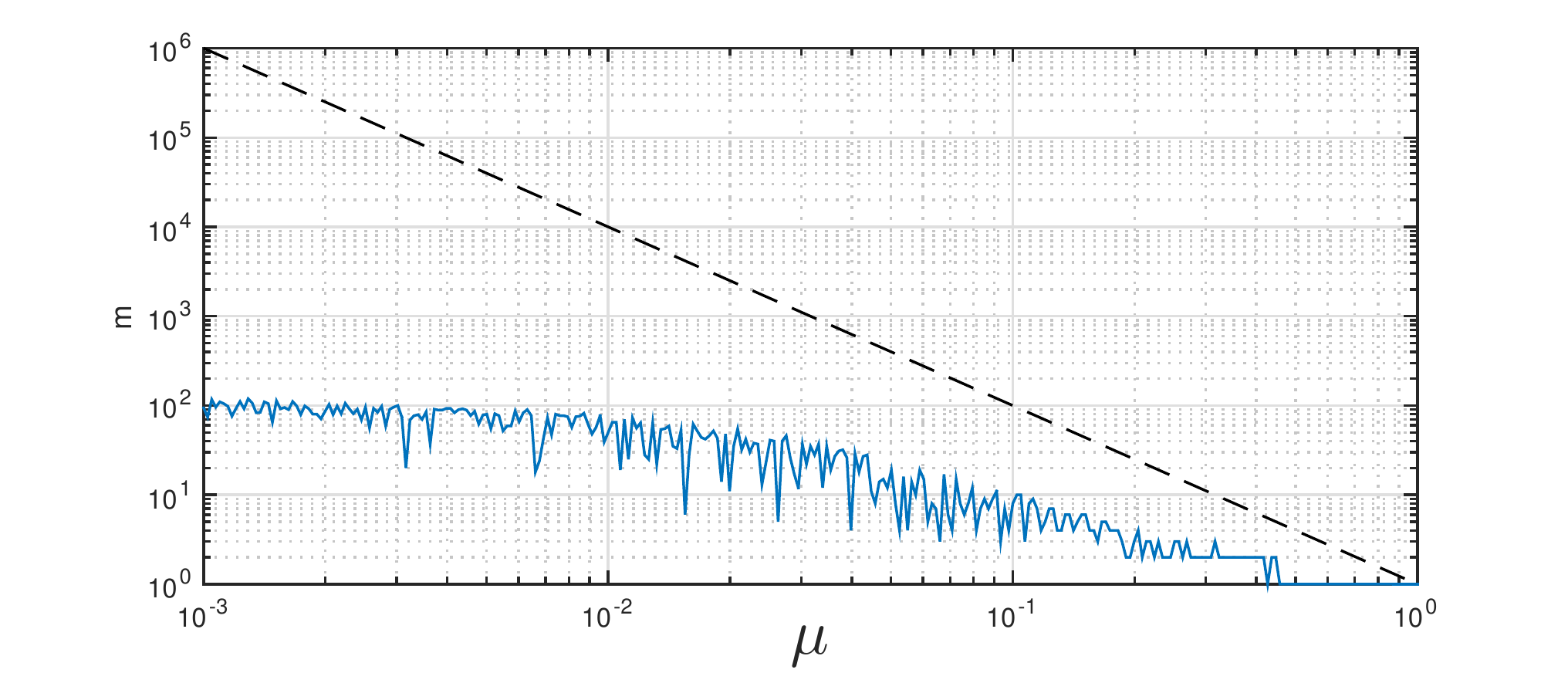}\end{center}
	\caption{\em 
		Number of alarms $m_T$ (Y-ax) in terms of different values of $\mu$ (X-ax).		
		Blue solid line: empirical numbers, black dashed line: theoretical bound  of $O(\mu^{-2})$
		(modulo constants).}
	\label{n3}
\end{figure}

\begin{figure}[htbp]
	\begin{center}\includegraphics[width=0.75\textwidth]{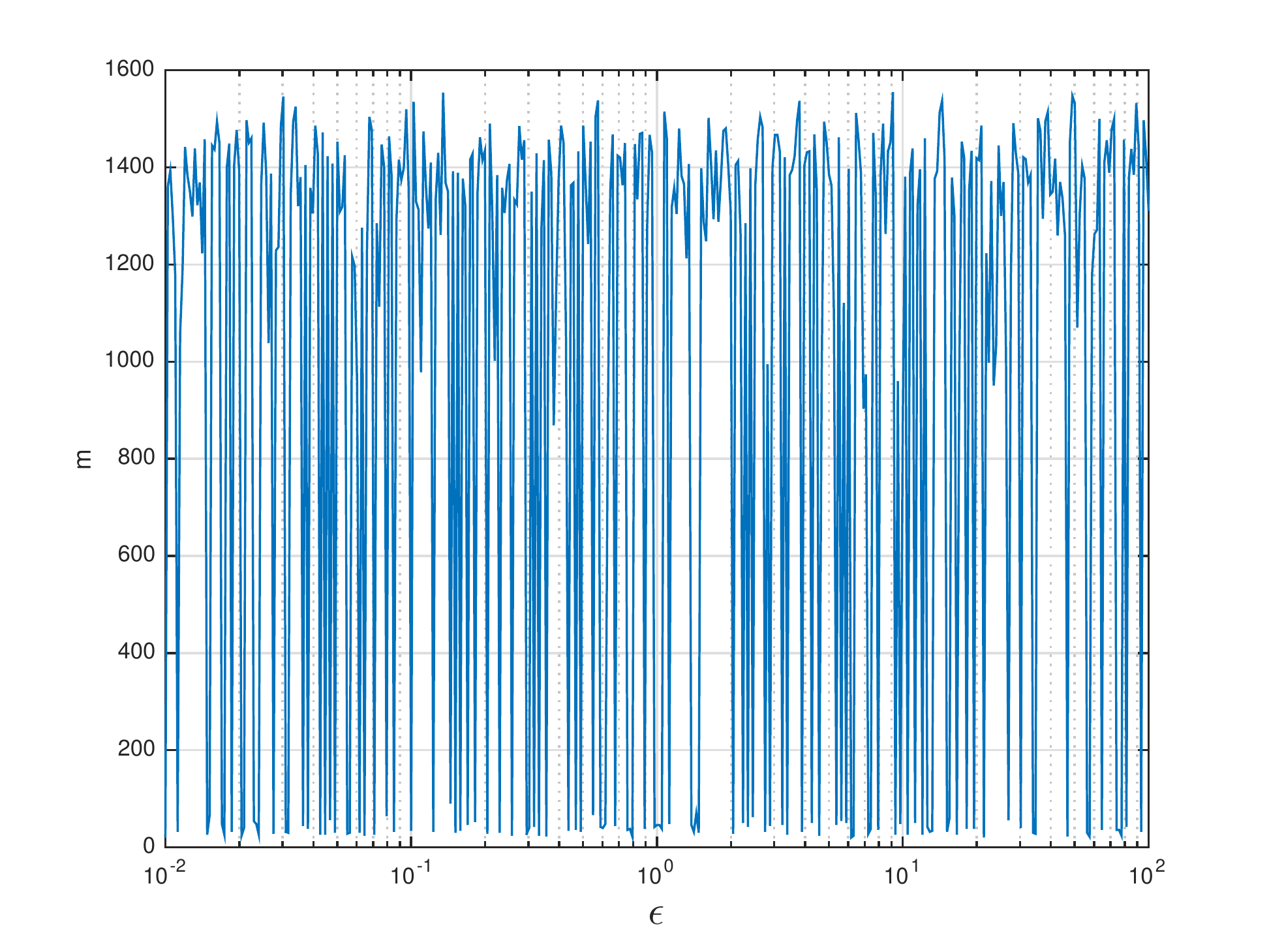}\end{center}
	\caption{\em 
		Number of alarms $m_T$ (Y-ax) in terms of different values of $\epsilon$ (X-ax)
		ranging from $10^{-2}$ to $10^{2}$. The values for $\mu,c$ and $n$ are randomised.
		Note that no trend is observed in the values of $m_T$.
		}
		\label{n4}
\end{figure}

Close inspection of Figures (\ref{n1}), (\ref{n2}), (\ref{n3}) and (\ref{n4}) 
evidences a number of insights for the FADO algorithm with fixed $\epsilon=1$:
\begin{itemize}
\item The number of mistakes $m$ varies (inversely) proportional the quantities $\mu,c,n$ as 
	presented in the bound. 
	
\item The precise orders seem to be overly conservative in the bound. 
	For example, figure \ref{n3} indicates that $m$ behaves inversely proportional to $\mu$.
	However, the bound of theorem 1 derives a dependency on $\mu$ as $\mu^{-2}$.
	
\item Note that especially the dependence on dimensionality seems very mild in
	practice, and the influence of $\|\bar\wv\|_2$ (here parametrised as $c$, see Fig. (\ref{n1})) seems to be royally overestimated.
	Indeed, we see that the algorithm is very well equipped to work in high dimensions.
	Here, for example we see that $n=100$ only incurs $m=\pm 100$ mistake alarms 
	(that is, a false alarm is raised in only $\pm 0.1\%$  of the presented cases.
	
\item The number of alarms $m$ does not vary with the radius $\epsilon$, nor 
	with the size of the data-points $\max_t\|\yv_t\|_2$. This is surprising as it is unlike other 
	mistake-based algorithms where such normalisations are explicitly present in 
	bounds and in the practically observed performances, see e.g. \cite{kivinen1995perceptron}.
\end{itemize} 
This then evidences that the algorithm works in general even better than as indicated by the 
presented (worst-case) bounds. 
This is  positive news towards applicability, but does not imply that the bounds are not 
tight as there might be really hard cases.
Note that the derivation of lower bounds for mistake-driven algorithms - often based 
on Hadamard constructions - or see e.g. \cite{kivinen1995perceptron} is scarcely addressed.

Lastly, we assess the performance of the FADO algorithm (unknown $\epsilon$) in Fig. (\ref{n32}).
As argued before, the more agnostic nature of this algorithm feeds on power, and 
is less robust. While the second item is established before, we assess power in case 
of $n=2$, $\|\bar\wv\|_2=\sqrt{2}$, and for a range of values $\mu$.
From the results, we see that the power flatlines around $98\%$ for most cases,
while it was consistently $100\%$ in case $\epsilon$ was provided.
Also note that the theoretical bound of $O(1/\mu^2)$ is not very tight in this case.

\begin{figure}[htbp]
	\begin{center}\includegraphics[width=0.75\textwidth]{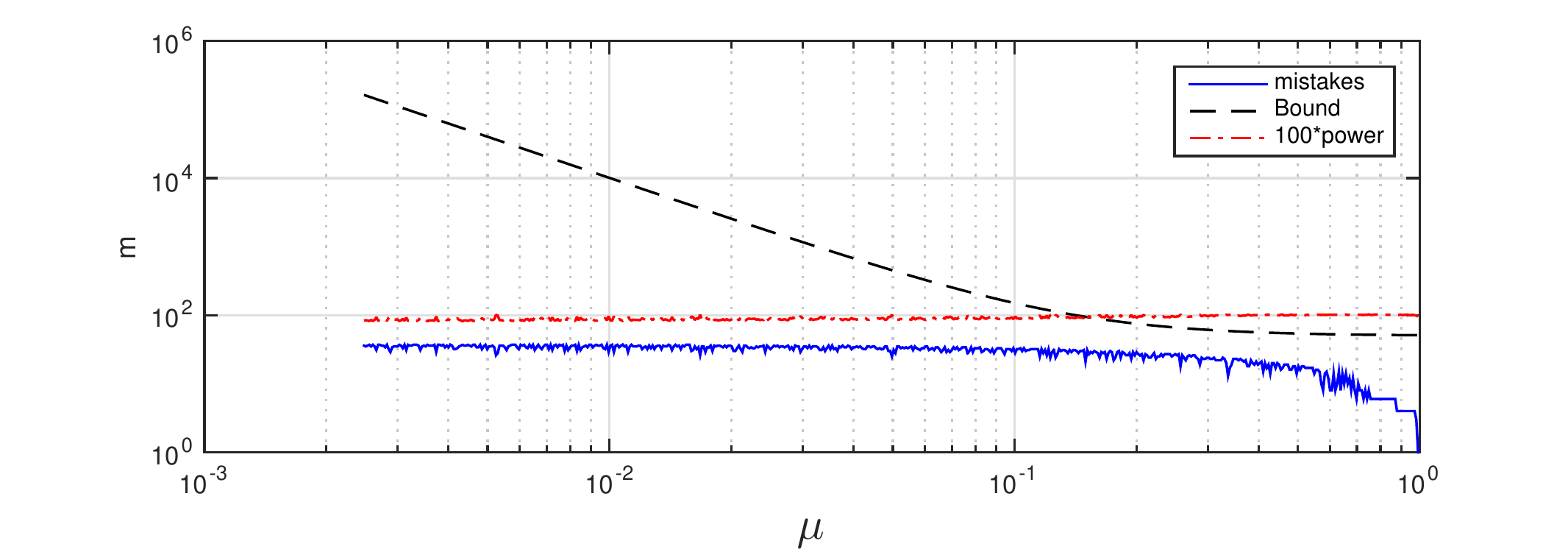}\end{center}
	\caption{\em 
		Number of alarms $m_T$ (Y-ax) in terms of different values of $\mu$ (X-ax)
		for the FADO algorithm where $\epsilon$ is unknown.		
		Blue solid line: empirical numbers, 
		Black dashed line: theoretical bound (modulo constants),
		The red dashed-dotted line gives {\em as a reference} the performance of the FADO($\epsilon$) algorithm where $\epsilon$ is given . }
	\label{n32}
\end{figure}

\section{Case study}

This section presents results obtained in an application of video processing for detecting transitions between scenes.
This case study is challenging for: 
\begin{itemize}
\item Its inherent high-dimensionality, $n=O(10^5)$, 
	but (relatively) small number of samples $T=O(10^4)$,
\item Its time-varying nature ('tracking'),
\item The presence of outliers, structural artefacts and typical video-based distortions, and 
\item The presence of intricate (but unknown) dependency structures, encoding the {\em meaning} of the video.
\end{itemize}
The aim of this case study is not so much 
to relate the performance of FADO to alternative approaches using 
advanced image processing, bounding box estimation and object recognition,
but rather to indicate how the unaltered FADO algorithm based on a fairly naive model 
does already surprisingly well using an absolute minimum of computing resources.
For a survey of applicable image processing tools for such task (also called {\em cut} 
or {\em shot transition detection}), see e.g. \cite{koprinska2001temporal} and subsequent work.
Note that one can deploy many clustering-based approaches for this task, but that 
the non-stochastic nature of the application invalidates the usual theoretical/stochastic results.
The principal objective of this case study is to illustrate that the present framework/algorithm does however provides a framework for handling such case.

\subsection{Case study: Setup}

A video\footnote{The complete video of the case study can be downloaded from 
\url{http://user.it.uu.se/~kripe367} $\rightarrow$ FADO.}
 is constructed by piecing together 16 short animal motion movie clips.
The original video clips are taken from the pioneering work of E.Muybridge (1830-1904).
The task now is to detect frames in the movie where transition to a subsequent clips are made
(the clips are depicting 
'ostrich', 'horse', 'bison', kangaroo', 'deer', 'leopard', 'dog', 'pig', 'donkey' and 'elephant').
For that, one needs to represent the subjects of the present clip.
One may do this via advanced image processing techniques, but inline with the 
previously developed theory, the $i$th clip is represented as a couple $(\bar\wv_i,\epsilon)$
where the frames $\yv_t$ of the $i$th clip obey
\begin{equation}
	\left\|\yv_t - \bar\wv_i\right\|_2 < \epsilon.
	\label{eq.case.eps}
\end{equation}
Here, the $t$th frame is represented as a vector $\yv_t\in[0,1]^n$ with $n=400\times 400=16.10^4$.
Traditional dimensionality reduction or manifold learning as in \cite{hastie01} 
are not really an option in the present case as the eigenvalue spectrum of those frames $\{\yv_t\}_t$ 
decays too slowly to allow for linear Principal Component Analysis (PCA), 
the enormous dimension hinders the application of advanced density/covariance estimation techniques,
while the manifold (based on the movement of the animals) changes too drastically over the different clips
for manifold learning to be applicable.
Moreover, since such approaches needs one to perform a dimensionality reduction phase on a dedicated set of data, 
one suffers a consequent loss of detection power,
 while it is not clear how (theoretically) to choose such dimensionality optimally.

After proper rescaling of the frames, one obtains the video as presented, 
consisting of  $T=3330$ different frames.
Note that this video is far from {\em clean}, including artefacts of the photography, the mouse pointer moving through the screen, and even a ghost window flashing up.
The corresponding dataset $\{\yv_t\}_{t=1}^{3330}$ with $n=160.000$ was however submitted to the FADO($\epsilon$) algorithm without any further preprocessing.

Choice of the threshold $\epsilon\geq 0$ in eq. (\ref{eq.case.eps}) is crucial to proper operation, and 
translates the intuition {\em 'how much can a scenery vary before it can be called a next clip.'}
Note that this $\epsilon$ can be learned from the data as well (see Subsection 2.4), 
but for the price of a seizable loss of robustness and power. 
Since this application is challenging enough as is, 
this threshold was fixed to $\epsilon=100$, based on simple, intuitive 
experiments of a single clip ('ostrich'). 

In this application, it is reasonable to expect that each different clip has a different vector $\bar\wv_i$
associated to it. 
This makes the setting time-varying. For such {\em tracking} settings, it is argued (see e.g. \cite{ljung1990,herbster2001tracking})
to take the gain as a constant. 
In the present case, we set this to unit, i.e. 
\begin{equation}
	\gamma_t = 1, \ \forall t.
	\label{eq.case.eps}
\end{equation}
Besides this, no tuning was required for this application.

\subsection{Case study: Discussion}

Figures (\ref{mb1}), (\ref{mb2}), (\ref{mb3}) and (\ref{mb4}) display 
$\yv_t$ and $\wv_{t-1}$ for  $t=99, 650, 1500$ and $t=3330$ respectively.
Figure (\ref{mb5}) gives the changepoints between the 16 clips (solid line), 
and the time-points where FADO(100) gives a detection/update.
The FADO(100) detects/updates on $\frac{1047}{3330} = \pm 34\%$ of the frames.
Since 16 clips were joined together, only $\frac{15}{3330} = \pm 0.5\%$ scenery changes (true alarms) exists.
From this experiment, a number of insights are gained.
\begin{itemize}
\item (Simple model)
	First of all, a `simple rule' as detailed in eq. (\ref{eq.case.eps})
	does already a surprising good job, despite the fact that we did not resort to 
	any image processing tools to {\em interpret} the content of the video.
	This is especially useful in `agnostic' cases where content is not known (hardcoded)
	in advance, i.e. where supervised approaches cannot be applied.
	
\item (Mistake-based learning)
	Observe that the $34\%$ of cases where FADO(100) did detect/update is not quite close to the 
	$0.5\%$ of cases where an actual transition did happen.
	However, figure  (\ref{mb5}) makes clear that the FADO(100) algorithm tags a series of frames after the transition happens.
	This is unavoidable as the algorithm needs to re-learn the new typical behavior of a new clip.
	A more fair comparison would be that 20-30 blocks of frames are associated to a changing scene, resulting in a 1\% detection rate.
	However, note that when $t$ progresses, 
	this re-adjusting phase becomes shorter and $\wv_t$ becomes more generic.
	This can also be seen by close inspections of subplots (b) 
	of Fig (\ref{mb1}), (\ref{mb2}), (\ref{mb3}) and (\ref{mb4}). 
	This can be intuitively interpreted that the algorithm {\em memorizes}  more and more 
	complex (mixtures of) patterns.

\item (Learning rate)
	Another issue where this application supplements the theoretical result is that 
	in this application $\bar\wv$ changes over time (`tracking'), and a constant gain $\gamma_t$
	 as in eq. (\ref{eq.case.eps}) is used, rather than the decreasing gain as e.g. in eq. (\ref{eq.a.gam}).
	The choice of a constant gain in this setting is quite intuitive because 
	the algorithm cannot converge to a single solution
	but needs a seizable mechanism to keep updating. 
	Note that the algorithm has to learn `more' (i.e. makes 
	more false detections) in the beginning of the experiment (`burn-in phase').
	In later phases, the memory becomes more generic as different scenarios are seen,
	making re-adjustment easier.

\end{itemize}
Figure (\ref{mb6}) shows how the frames are organised in terms of their 3 first principal 
components.

\begin{figure}[htbp]
	\begin{center}\includegraphics[width=0.75\textwidth]{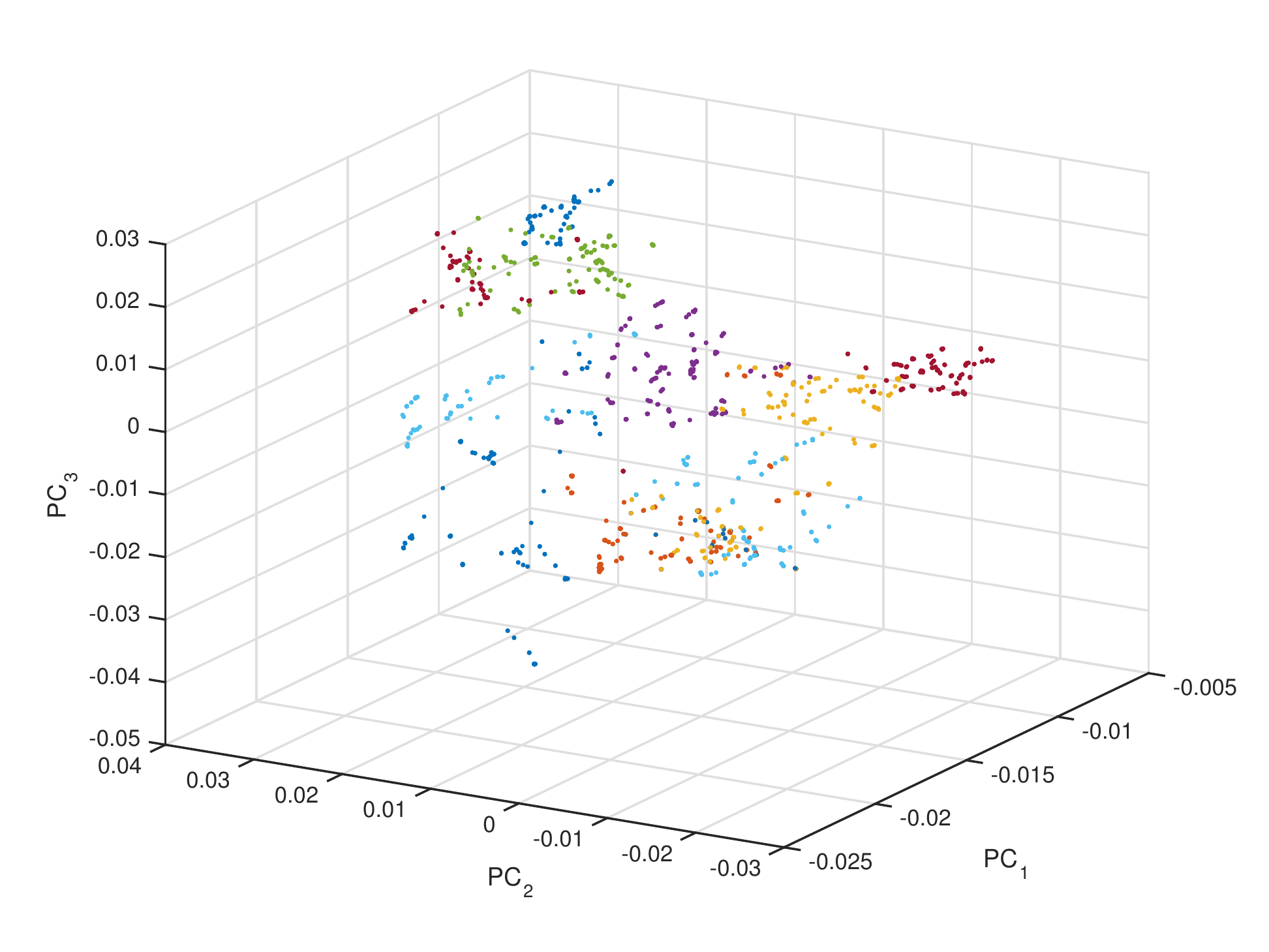}\end{center}
	\caption{\em All the frames $\{\yv_t\}_{t=1}^{3330}$ plotted as (`.')
	with color associated to 16 clips, in terms of the 3 largest Principal Components (PC) 
	of a (linear) PC Analysis (PCA): (PC$_1$,PC$_2$,PC$_3$).
	}
	\label{mb6}
\end{figure}

\begin{figure}[htbp]
	\begin{center}\includegraphics[width=0.75\textwidth]{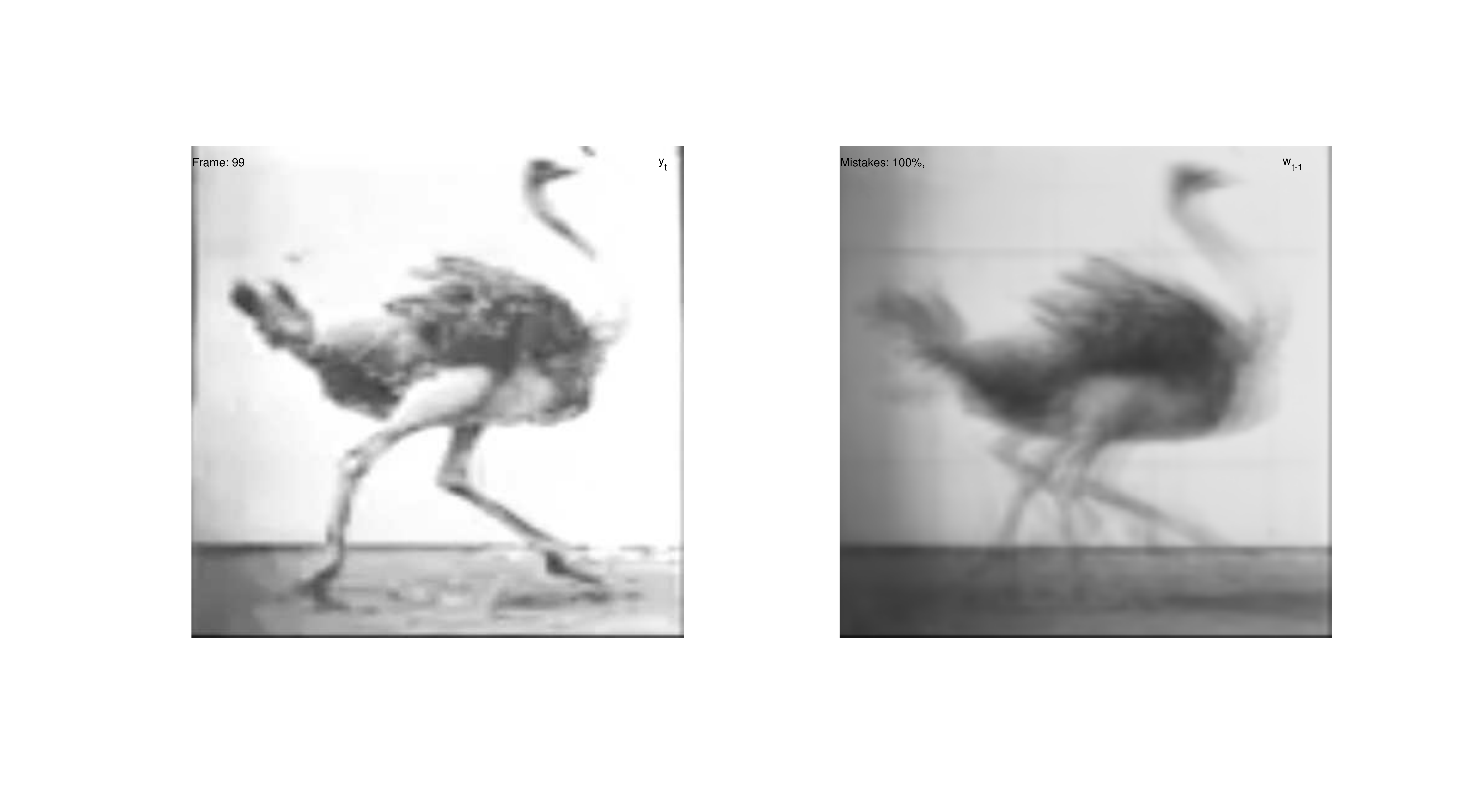}\end{center}
	\caption{\em 
	The result of FADO(100) on the Muybridge video with $n=400*400$ after $t=99$ iterations 
	(after having seen almost a single clip).
	Subplot (a) displays the frame $\yv_{99}$. 	Subplot (b) displays the 'memory' $\wv_{98}$.}
	\label{mb1}
\end{figure}

\begin{figure}[htbp]
	\begin{center}\includegraphics[width=0.75\textwidth]{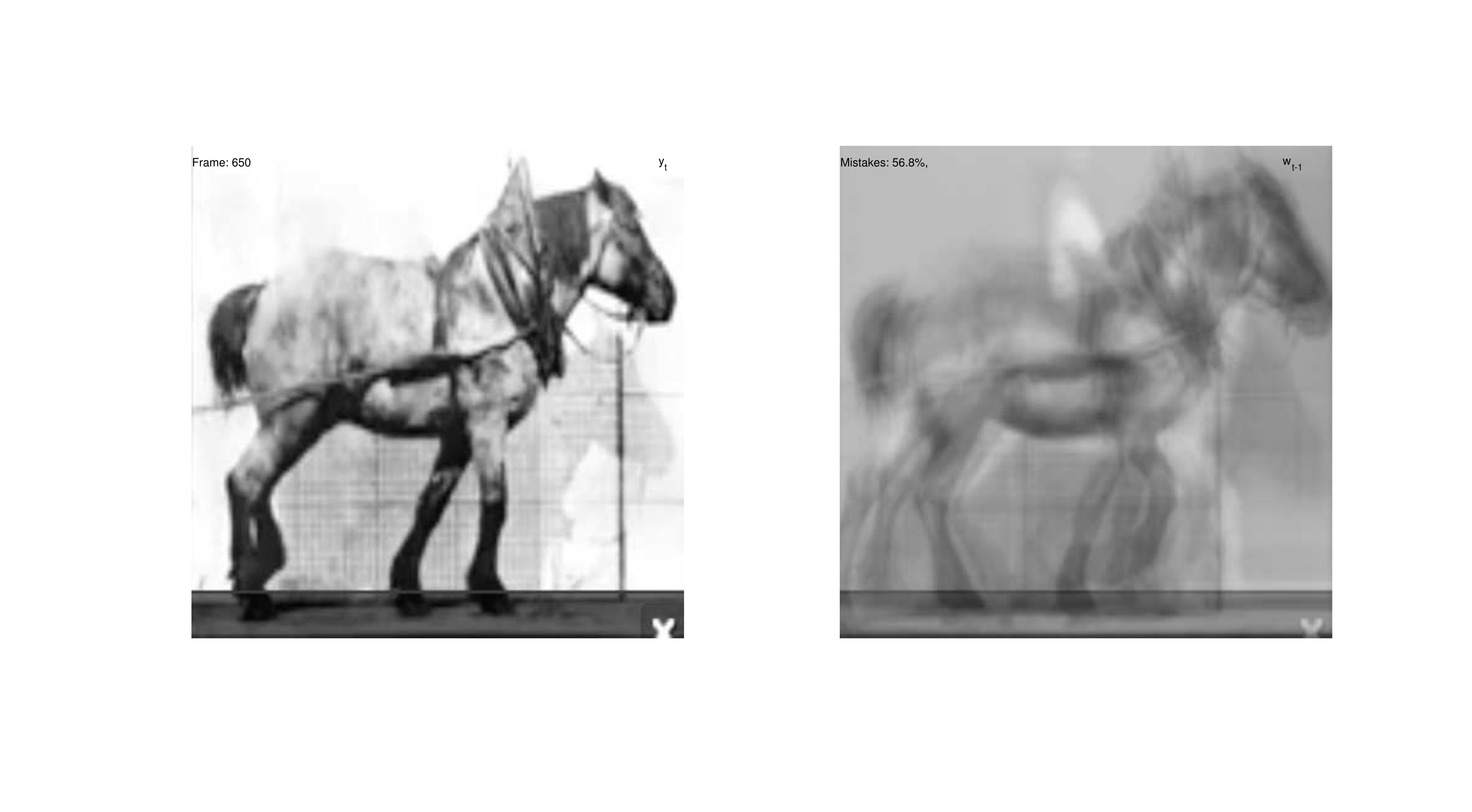}\end{center}
	\caption{\em 
	The result of FADO(100) on the Muybridge video after $t=650$ iterations 
	(after having seen almost 4 clips).
	Subplot (a) displays the frame $\yv_{650}$. 	Subplot (b) displays the 'memory' $\wv_{649}$.}
	\label{mb2}
\end{figure}

\begin{figure}[htbp]
	\begin{center}\includegraphics[width=0.75\textwidth]{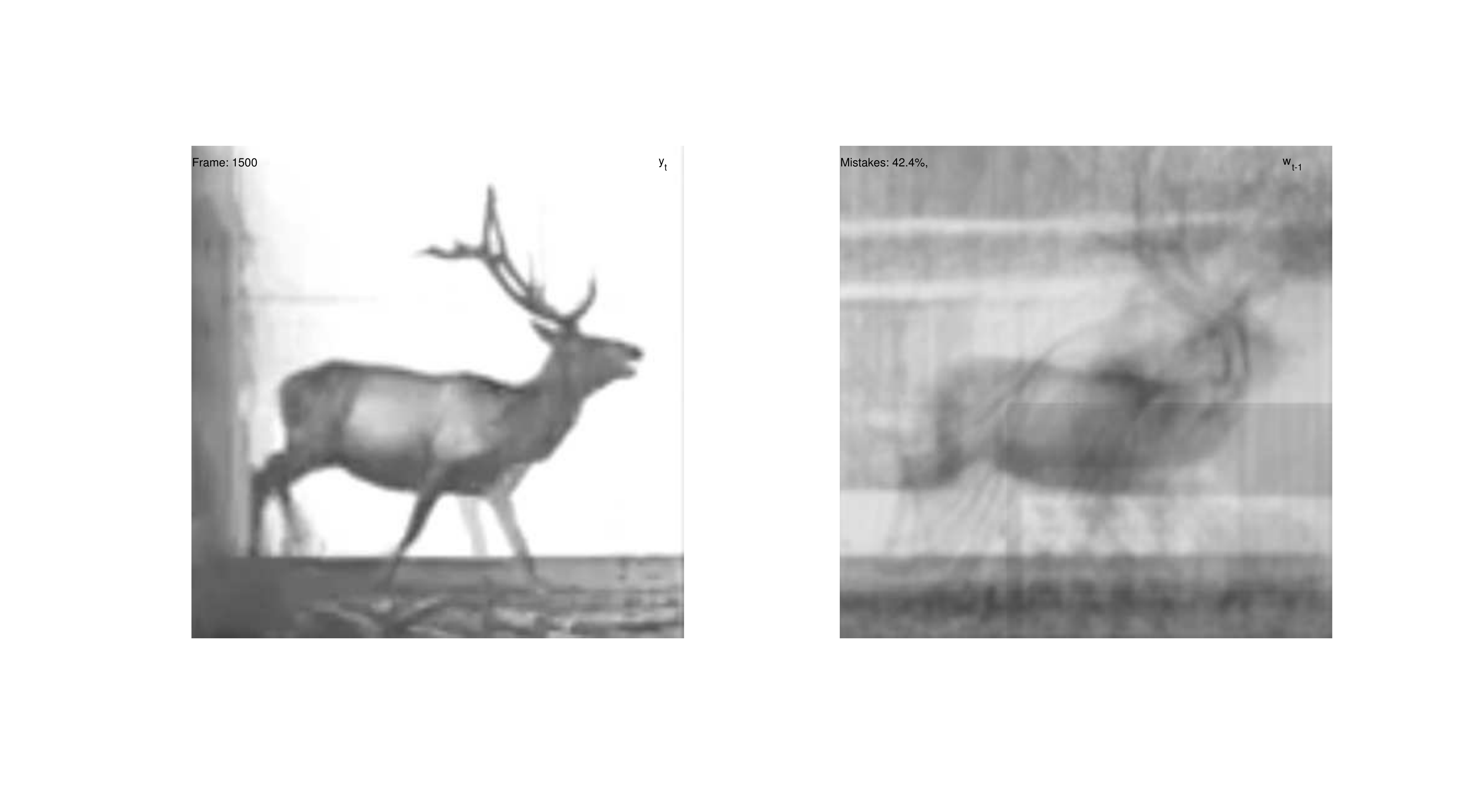}\end{center}
	\caption{\em 
	The result of FADO(100) after $t=1500$ iterations  
	(after having seen almost 7 clips).
	Subplot (a) displays the frame $\yv_{1500}$. 	Subplot (b) displays the 'memory' $\wv_{1499}$.}
	\label{mb3}
\end{figure}

\begin{figure}[htbp]
	\begin{center}\includegraphics[width=0.75\textwidth]{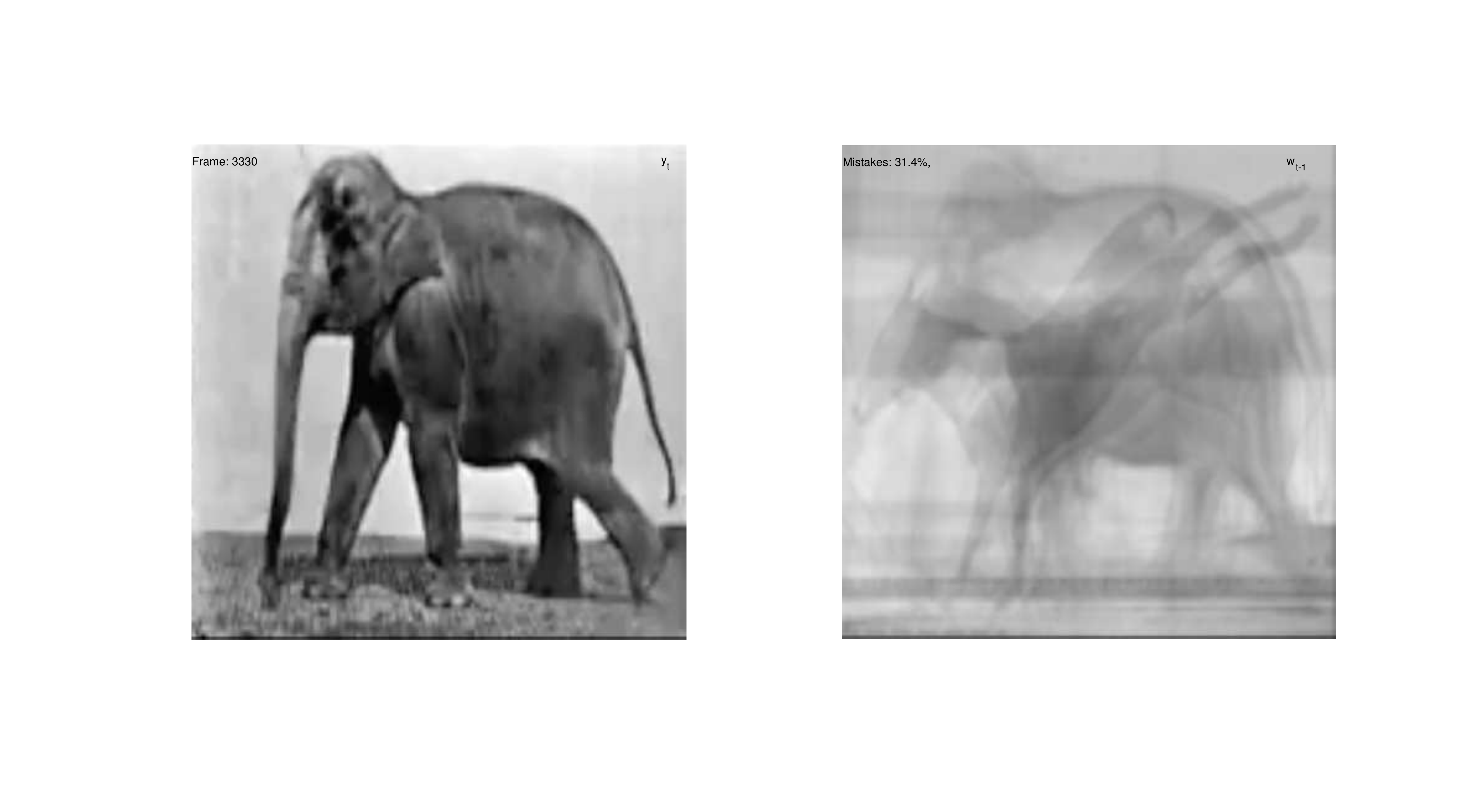}\end{center}
	\caption{\em 
	The result of FADO(100) after $t=3330$ iterations  
	(after having seen all 16 clips).
	Subplot (a) displays the frame $\yv_{3330}$. 	Subplot (b) displays the 'memory' $\wv_{3329}$.}
	\label{mb4}
\end{figure}

\begin{figure}[htbp]
	\begin{center}\includegraphics[width=1.1\textwidth]{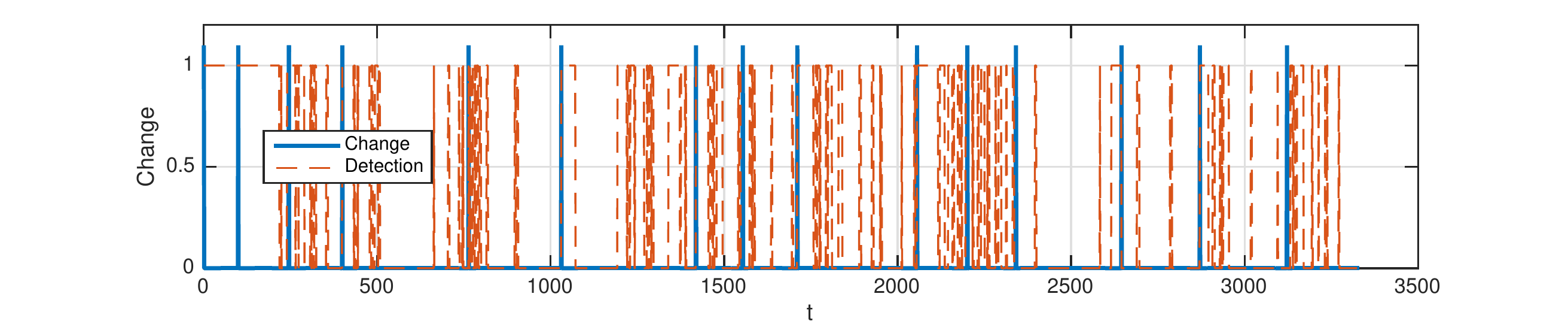}\end{center}
	\caption{\em 
	The result of FADO(100).
	The changepoints between the 16 different clips are shown as a blue solid line 
	(`1'=changepoint, `0'=contination).
	The red dashed line shows the detections 
	(`1': detection, `0': no detection). made by the application of FADO(100).}
	\label{mb5}
\end{figure}

\section{Conclusions}

This paper discusses a simple online (recursive) detection algorithm dubbed FADO 
for use in deterministic, adversary scenarios.
That is, for use in cases where methods of density estimation or stochastic inference are 
not applicable for their high-dimensionality, complex dependency structures,
presence of outliers, structural artefacts and other aberrations.
The main result is a theoretical foundation for use of (online) 
clustering methods for detection and data-mining applications.
Theoretical results are backed up by a numerical experiments and a video-processing case study.
The surprising bottomline is that no complex structures 
(as artificial neural networks or convolutional neural nets) are needed really.
This approach opens up for many new directions: 
e.g. how to start analysing more complex, multiple clustering methods? 
From a theoretical perspective, we like to see further work on tracking properties of
such mistake-based algorithms,
and explain the observed capability of the FADO algorithm to handle cases with enormous dimensionsionality.


%

%
%

\end{document}